\newcommand{\kopt}{k_{opt}}
\DeclareMathOperator*{\argmax}{arg\,max}
\newcommand{\ignore}[1]{}
\DeclareMathOperator{\Var}{Var}
\title{AAAI Press Formatting Instructions \\for Authors Using \LaTeX{} --- A Guide }
\author{Written by AAAI Press Staff\textsuperscript{\rm 1}\thanks{Primarily Mike Hamilton of the Live Oak Press, LLC, with help from the AAAI Publications Committee}\\ \Large \textbf{AAAI Style Contributions by
Pater Patel Schneider,} \\ \Large \textbf{Sunil Issar, J. Scott Penberthy, George Ferguson, Hans Guesgen}\\ 
\textsuperscript{\rm 1}Association for the Advancement of Artificial Intelligence\\ 
2275 East Bayshore Road, Suite 160\\
Palo Alto, California 94303\\
publications20@aaai.org 
}
\title{Optimization of Chance-Constrained Submodular Functions}
\author{
Benjamin Doerr,\textsuperscript{\rm 1}
Carola Doerr,\textsuperscript{\rm 2}
Aneta Neumann,\textsuperscript{\rm 3}
Frank Neumann,\textsuperscript{\rm 3}
Andrew M. Sutton,\textsuperscript{\rm 4}\\
\textsuperscript{\rm 1}{Laboratoire d'Informatique (LIX), CNRS, \'Ecole Polytechnique, Institut Polytechnique de Paris, Palaiseau, France}\\
\textsuperscript{\rm 2}{Sorbonne Universit\'e, CNRS, LIP6, Paris, France}\\
\textsuperscript{\rm 3}{Optimisation and Logistics, School of Computer Science, The University of Adelaide, Adelaide, Australia}\\
\textsuperscript{\rm 4}{Department of Computer Science,  University of Minnesota Duluth, Duluth, MN, USA}\\
}
\newtheorem{lemma}{Lemma}
\newtheorem{theorem}{Theorem}
\newcommand{\carola}[1]{\textcolor{blue}{\textbf{CD:} #1}}
\newcommand{\R}{\mathbb{R}}
\begin{document}

\maketitle

\begin{abstract}
  Submodular optimization plays a key role in many real-world problems. In many real-world scenarios, it is also necessary to handle uncertainty, and potentially disruptive events that violate constraints in stochastic settings need to be avoided. In this paper, we investigate submodular optimization problems with chance constraints. We provide a first analysis on the approximation behavior of popular greedy algorithms for submodular problems with chance constraints. Our results show that these algorithms are highly effective when using surrogate functions that estimate constraint violations based on Chernoff bounds. Furthermore, we investigate the behavior of the algorithms on popular social network problems and show that high quality solutions can still be obtained even if there are strong restrictions imposed by the chance constraint.
\end{abstract}

\sloppy{
\section{Introduction}
Many real-world problems optimization problems involve uncertain components such as the execution length of a job, the fraction of ore in a cartload of rocks, the probability of earthquakes, etc. Safe critical systems or expensive productions must limit the potential violation of constraints imposed by such stochastic components. Constraints that explicitly address the probability of violation are known as chance constraints. Chance-constrained optimization deals with optimizing a given problem under the condition that the probability of a constraint violation does not exceed a given threshold probability $\alpha$. 

In this work we study optimization under chance constraints for submodular functions. Submodular functions model problems where the benefit of adding components diminishes with the addition of elements. They form an important class of optimization problems, and are extensively studied in the literature~\cite{Nemhauser:1978,DBLP:journals/mp/NemhauserWF78,vondrak2010submodularity,DBLP:books/cu/p/0001G14,DBLP:conf/icml/BianB0T17,DBLP:conf/kdd/LeskovecKGFVG07,pmlr-v65-feldman17b,pmlr-v97-harshaw19a}.
However, to our knowledge, submodular functions have not yet been studied in the chance-constrained setting.  

\subsection{Our Results}
Given a set $V$, we analyze the maximization of submodular functions 
$f:2^V \to \mathbb{R}$ subject to linear chance constraints $\Pr[W(S)>B] \leq \alpha$ where $S \subseteq V$ and $W(S)$ is the sum of the random weights of the elements in $S$. Here, $B$ is a deterministic constraint boundary and $\alpha$ is a tolerance level. The objective is thus to find a set that maximizes $f$ subject to the probability that its stochastic weights violate the boundary is at most $\alpha$.

Our focus is on \emph{monotone submodular functions,} which are set functions characterized by the property that the function value cannot decrease by adding more elements. The optimization of the considered classes of submodular problems with deterministic constraints has already been investigated by \cite{DBLP:journals/mp/NemhauserWF78,DBLP:conf/kdd/LeskovecKGFVG07}. The theoretical contribution of this paper extends these results to the chance constrained setting.

Since the computation of the chance constraint is usually not efficiently feasible, we assume it is evaluated by using a surrogate function that provides an upper bound on the constraint violation probability $\Pr[\sum_{s \in S}{W(s)}>B]$. This upper bound ensures that the chance constraint is met if the surrogate provides a value of at most $\alpha$. As surrogates, we use popular deviation inequalities such as Chebyshev's inequality and Chernoff bounds. 

We show that using these surrogate functions, popular greedy algorithms are also applicable in the chance-constrained setting. In particular, we analyze the case of uniformly distributed weights with identical dispersion and show that both inequalities only lead to a loss of a factor of $1-o(1)$ compared to the deterministic setting. 

We complement our theoretical work with an experimental investigation on the influence maximization problem in social networks. This investigation empirically analyzes the behavior of the greedy approaches for various stochastic settings. In particular, it shows the effectiveness of using Chernoff bounds for large inputs if only a small failure rate in terms of $\alpha$ can be tolerated.


\subsection{Related work}
Submodular optimization has been studied for a wide range of different constraint types, see, for example,~\cite{DBLP:books/cu/p/0001G14} and references mentioned therein. Many of the results on monotone submodular functions are based on simple greedy selection strategies that are able to achieve provably the best possible approximation ratio in polynomial time, unless P=NP~\cite{DBLP:journals/mp/NemhauserWF78}. \cite{DBLP:conf/aaai/000100QR19} recently showed that greedy approaches are also successful when dealing with non-monotone submodular functions. Furthermore, Pareto optimization approaches can achieve the same worst-case performance guarantees while performing better than greedy approaches in practice if the user allows for a sufficiently large time budget~\cite{DBLP:conf/ijcai/QianSYT17,DBLP:conf/nips/QianYZ15,DBLP:conf/nips/QianS0TZ17}. \cite{DBLP:conf/aaai/RoostapourN0019} showed that the adaptation of greedy approaches to monotone submodular problems with dynamic constraints might lead arbitrarily bad approximation behavior, whereas a Pareto optimization approach can effectively deal with dynamic changes. Evolutionary algorithms for the chance-constrained knapsack problem, which constitutes a subclass of the chance-constrained submodular problems examined in this paper, have been experimentally investigated by \cite{DBLP:conf/gecco/XieHAN019}.

The paper is structured as follows. Next, we introduce the class of submodular optimization problems and the algorithms that are subject to our investigations. Afterwards, we establish conditions to meet the chance constraints based on tail-bound inequalities. We present our theoretical results for chance-constrained submodular optimization for different classes of weights. Building on these foundations, we present empirical results that illustrate the effect of different settings of uncertainty on the considered greedy algorithms for the influence maximization problem in social networks. Finally, we finish with some concluding remarks.

\section{Chance-Constrained Submodular Functions}

Given a set $V = \{v_1, \ldots, v_n\}$, we consider the optimization of a monotone submodular function $f \colon 2^V \rightarrow \R_{\ge 0}$.
A function is called monotone iff for every $S, T \subseteq V$ with $S \subseteq T$, $f(S) \leq f(T)$ holds. A function $f$ is called submodular iff for every $S, T \subseteq V$ with $S \subseteq T$ and $x \not \in T$ we have
$$f(S \cup \{x\}) - f(S) \geq f(T \cup \{x\}) - f(T).$$

We consider the optimization of such a monotone submodular function $f$ subject to a chance constraint where each element $s \in V$ takes on a random weight $W(s)$. Precisely, we are considering constraints of the type
$$\Pr[W(S) > B] \leq \alpha.$$ 
where $W(S)=\sum_{s\in S}{W(s)}$ is the sum of the random weights of the elements and $B$ is the given constraint bound.  The parameter $\alpha$ quantifies the probability of exceeding the bound $B$ that can be tolerated.

It should be noted that for the uniform distribution, the exact joint distribution can, in principle, be computed  as convolution if the random variables are independent. There is also an exact expression for the Irwin-Hall distribution~\cite{MR1326603} which assumes that all random variables are independent and uniformly distributed within $[0,1]$. However, using these approaches may not be practical when the number of chosen items is large. 

\subsection{Greedy Algorithms}

We consider in this work the performance of greedy algorithms for the optimization of chance constrained submodular functions. Our first greedy algorithm (GA, see Algorithm~\ref{alg:GA}) starts with an empty set and subsequently adds in each iteration an element with the largest marginal gain that does not violate the chance constraint. It ends when no further element can be added. Algorithm~\ref{alg:GA} was already investigated by~\cite{DBLP:journals/mp/NemhauserWF78} in the deterministic setting.
Note that the computation of the probability
$\Pr[W(S) > B]$
can usually not be computed exactly and we make use of a surrogate $\widehat{\Pr}[W(S) > B] \leq \alpha$ on this value (see line~5 of Algorithm~\ref{alg:GA}). Since we use upper bounds for the constraint violation probability, we are guaranteed that the constraint is met whenever our surrogate $\widehat{\Pr}$ is at most $\alpha$.

Our second greedy algorithm is the generalized greedy algorithm (GGA), and is listed in Algorithm~\ref{alg:GGA}. The GGA extends the GA to the case in which the elements have different expected weights. It has previously been used in the deterministic setting~\cite{DBLP:journals/ipl/KhullerMN99,DBLP:conf/kdd/LeskovecKGFVG07}.
The algorithm starts with the empty set. In each iteration, it adds an element whose ratio of the additional gain with respect to the submodular function $f$ and the expected weight increase $E[W(S \cup\{v\})-W(S)]$ of the constraint is maximal while still satisfying the chance constraint. The algorithm terminates if no further element can be added. At this point, it compares this constructed greedy solution with each of the $n$ solutions consisting of a single element, and returns the solution with the maximal $f$-value subject to the surrogate function is at most $\alpha$.
Note that we are using the exact calculation for $\Pr[W(v)> B]$ when considering a single element in line~9.
Lines~9 and~10 of Algorithm~\ref{alg:GGA} are needed in cases where large items of high profit exist, see~\cite{DBLP:journals/ipl/KhullerMN99,DBLP:conf/kdd/LeskovecKGFVG07} for more details.

\subsection{Concentration Bounds}
\label{sec:concentration}

We work with two different surrogates, which are concentration bounds of Chernoff and Chebyshev type. Such bounds are frequently used in the analysis of randomized algorithms~\cite{Motwani1995}. All bounds are well-known and can be found, e.g., in~\cite{Doerr18bookchapter}.

\begin{theorem}[Multiplicative Chernoff bound]
\label{thm:chernoff}
  Let $X_1, \ldots, X_n$ be independent random variables taking values in $[0,1]$. Let $X = \sum_{i = 1}^n X_i$. Let $\epsilon \ge 0$. Then 
  \begin{align}
  \Pr[X \ge (1+\epsilon) E[X]] 
  &\le \bigg(\frac{e^\epsilon}{(1+\epsilon)^{1+\epsilon}}\bigg)^{E[X]} \label{chernoff:exp} \\
  \le &\exp\bigg(-\frac{\min\{\epsilon^2,\epsilon\} E[X]}{3}\bigg).\label{eqprobCMUlin2}
  \end{align}
For $\epsilon \le 1$, \eqref{eqprobCMUlin2} simplifies to 
  \begin{equation}
  \Pr[X \ge (1+\epsilon) E[X]] \le \exp\bigg(-\frac{\epsilon^2 E[X]}{3}\bigg).\label{eqprobCMUeasy}
  \end{equation}
\end{theorem}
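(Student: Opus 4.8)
The plan is to run the standard exponential-moment (Chernoff) method. First I would fix a parameter $t > 0$ and apply Markov's inequality to the nonnegative random variable $e^{tX}$, which gives $\Pr[X \ge (1+\epsilon)E[X]] = \Pr[e^{tX} \ge e^{t(1+\epsilon)E[X]}] \le e^{-t(1+\epsilon)E[X]}\, E[e^{tX}]$. By independence of the $X_i$ the moment generating function factorizes, $E[e^{tX}] = \prod_{i=1}^n E[e^{tX_i}]$, so it suffices to bound each factor. Since $X_i \in [0,1]$, convexity of $u \mapsto e^{tu}$ on $[0,1]$ yields $e^{tX_i} \le 1 + (e^t-1)X_i$ pointwise, hence $E[e^{tX_i}] \le 1 + (e^t-1)E[X_i] \le \exp\!\big((e^t-1)E[X_i]\big)$ using $1+y \le e^y$.

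Multiplying over $i$ and writing $\mu := E[X] = \sum_i E[X_i]$ gives $E[e^{tX}] \le \exp\!\big((e^t-1)\mu\big)$, and therefore $\Pr[X \ge (1+\epsilon)\mu] \le \exp\!\big(\mu(e^t - 1 - t(1+\epsilon))\big)$. Minimizing the exponent over $t>0$ (the case $\epsilon = 0$ being trivial) leads to the choice $t = \ln(1+\epsilon) > 0$; substituting it collapses the exponent to $\mu\big(\epsilon - (1+\epsilon)\ln(1+\epsilon)\big)$, which is precisely $\big(e^\epsilon/(1+\epsilon)^{1+\epsilon}\big)^{\mu}$. This establishes \eqref{chernoff:exp}.

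To pass from \eqref{chernoff:exp} to \eqref{eqprobCMUlin2} I would verify the elementary inequality $(1+\epsilon)\ln(1+\epsilon) - \epsilon \ge \tfrac13\min\{\epsilon^2,\epsilon\}$ for all $\epsilon \ge 0$, splitting at $\epsilon = 1$. For $\epsilon \le 1$ set $g(\epsilon) = (1+\epsilon)\ln(1+\epsilon) - \epsilon - \tfrac13\epsilon^2$ and check $g(0)=0$, $g'(0)=0$, and that $g'(\epsilon) = \ln(1+\epsilon) - \tfrac23\epsilon$ stays nonnegative on $[0,1]$; for $\epsilon > 1$ set $h(\epsilon) = (1+\epsilon)\ln(1+\epsilon) - \tfrac43\epsilon$ and check $h(1) = 2\ln 2 - \tfrac43 > 0$ together with $h'(\epsilon) = \ln(1+\epsilon) - \tfrac13 > 0$ for $\epsilon \ge 1$. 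Finally, \eqref{eqprobCMUeasy} is just the restriction of \eqref{eqprobCMUlin2} to $\epsilon \le 1$, where $\min\{\epsilon^2,\epsilon\} = \epsilon^2$.

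The only mildly delicate step is the monotonicity argument for $g$ on $[0,1]$: its derivative $g'$ is itself not monotone (since $g''(\epsilon) = \tfrac1{1+\epsilon} - \tfrac23$ changes sign at $\epsilon = \tfrac12$), so one cannot simply argue $g' \ge g'(0) = 0$; instead one observes that $g'$ rises on $[0,\tfrac12]$, falls on $[\tfrac12,1]$, and is still positive at the right endpoint because $\ln 2 > \tfrac23$, hence $g' \ge 0$ throughout and $g \ge g(0) = 0$. Everything else is the textbook computation, and since the statement is quoted as well known (with a reference to the lecture-notes source), it may in fact simply be cited rather than reproven.
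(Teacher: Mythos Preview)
Your proof is correct and is exactly the standard exponential-moment derivation; the analysis of $g$ and $h$ is careful and the sign change of $g''$ is handled properly. However, the paper does not prove this theorem at all: it is stated as a well-known concentration bound and simply attributed to the reference~\cite{Doerr18bookchapter}. You in fact anticipated this possibility in your final sentence, and that is precisely what happens here.
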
	

For our experimental investigations, we work with equation~\eqref{chernoff:exp}, whereas equation~\eqref{eqprobCMUeasy} is used through our theoretical analysis. Note that equation~\eqref{eqprobCMUeasy} gives the weaker bound. Therefore, our theoretical results showing approximation guarantees also hold when working with equation~\eqref{chernoff:exp}.
Chernoff bounds are very useful when requiring very small values of $\alpha$. For larger values of~$\alpha$, e.g. $\alpha=0.1$, we often get better estimates when working with a variant of Chebyshev's inequality. As we are only interested in the probability of exceeding a given constraint bound, we consider a one-sided Chebyshev inequality (also known as Cantelli's inequality), which estimates the probability of exceeding the expected value taking into account the variance of the considered random variable.

\begin{theorem}[(One-sided) Chebyshev's inequality]
\label{thm:1s-chebyshev}
  Let $X$ be a random variable with expected value $E[X]$ and variance $\Var[X] > 0$. Then, for all $\lambda > 0$, 
  \begin{align}
  &\Pr[X \ge E[X] + \lambda] \leq \frac{\Var[X]}{\Var[X]+\lambda^2}.
  \end{align}
\end{theorem}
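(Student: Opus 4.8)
The plan is to derive this from Markov's inequality applied to a cleverly shifted and squared version of the random variable. First I would reduce to the centered case: replacing $X$ by $Y := X - E[X]$ changes neither side of the claimed inequality (the event $X \ge E[X] + \lambda$ becomes $Y \ge \lambda$, and $\Var[Y] = \Var[X]$), so it suffices to prove $\Pr[Y \ge \lambda] \le \Var[X]/(\Var[X] + \lambda^2)$ for a random variable $Y$ with $E[Y] = 0$.

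The key observation is that for every constant $t \ge 0$ one has the pointwise inequality $\mathbf{1}_{\{Y \ge \lambda\}} \le (Y+t)^2/(\lambda+t)^2$: on the event $\{Y \ge \lambda\}$ we have $Y + t \ge \lambda + t > 0$ and hence $(Y+t)^2 \ge (\lambda+t)^2$, while off this event the right-hand side is nonnegative because it is a square. Taking expectations on both sides and using $E[(Y+t)^2] = \Var[Y] + (E[Y] + t)^2 = \Var[X] + t^2$, this yields
\begin{equation*}
\Pr[Y \ge \lambda] \le \frac{\Var[X] + t^2}{(\lambda + t)^2} \qquad \text{for every } t \ge 0 .
\end{equation*}

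It then remains to optimize the free parameter $t$. Differentiating the right-hand side with respect to $t$ shows the minimum is attained at $t = \Var[X]/\lambda$, which is admissible since $\Var[X] > 0$ and $\lambda > 0$; substituting this value and simplifying collapses the bound to exactly $\Var[X]/(\Var[X] + \lambda^2)$, which is the claim. I do not expect a genuine obstacle here: this is the classical Cantelli argument, and the only real computation is the one-variable minimization, which is routine. The one point that deserves a line of care is verifying that the optimal $t$ is nonnegative — which is precisely where the one-sided nature of the statement, together with the hypothesis $\Var[X] > 0$ (so that no division by zero occurs), comes into play.
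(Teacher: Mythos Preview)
Your argument is the standard Cantelli proof and is correct in every detail: the centering step, the pointwise bound $\mathbf{1}_{\{Y\ge\lambda\}}\le (Y+t)^2/(\lambda+t)^2$ for $t\ge 0$, the expectation computation $E[(Y+t)^2]=\Var[X]+t^2$, and the optimization at $t=\Var[X]/\lambda$ all check out. There is nothing to compare against, however, because the paper does not prove this theorem at all; it simply states it as a well-known concentration inequality and refers the reader to \cite{Doerr18bookchapter} for a proof. So your proposal supplies more than the paper does.
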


\begin{algorithm}[t]
	\SetKwInOut{Input}{input}
    \Input{
    Set of elements $V$, budget constraint $B$, failure probability $\alpha$.}
    $S \leftarrow\emptyset$\;
		$V^\prime \leftarrow V$\;
\Repeat{$V^\prime \leftarrow \emptyset$}{$v^*\leftarrow \argmax_{v\in V^\prime} (f(S \cup\{v\})-f(S))\label{lineGA:KostenNutzen}$\;
    \If{$\widehat{\Pr}[W(S\cup \{v^*\})> B]\leq \alpha$} {$S\leftarrow S\cup \{v^*\}$\;
    $V^\prime \leftarrow V^\prime\setminus \{v^*\}$\;}}
    \Return{$S$}\;
    \caption{Greedy Algorithm (GA)}\label{alg:GA}
    \end{algorithm}

\begin{algorithm}[t]
	\SetKwInOut{Input}{input}
    \Input{
    Set of elements $V$, budget constraint $B$, failure probability $\alpha$.}
    $S \leftarrow\emptyset$\;
		$V^\prime \leftarrow V$\;
\Repeat{$V^\prime \leftarrow \emptyset$}{$v^*\leftarrow \argmax_{v\in V^\prime}\frac{f(S \cup\{v\})-f(S)}{E[W(S\cup \{v\})-W(S)]\label{line:KostenNutzen}}$\;
    \If{$\widehat{\Pr}[W(S\cup \{v^*\})> B]\leq \alpha$} {$S\leftarrow S\cup \{v^*\}$\;
    $V^\prime \leftarrow V^\prime\setminus \{v^*\}$\;}}
    $v^* \leftarrow \argmax_{\{v\in V;\Pr[W(v)> B] \leq \alpha\} }f(v)$\;
    \Return{$\argmax_{Y\in \{S,\{v^*\}\}}f(Y)$}\;
    \caption{Generalized Greedy Algorithm (GGA)}\label{alg:GGA}
    \end{algorithm}

\section{Chance Constraint Conditions}
We now establish conditions to meet the chance constraint. We start by considering the Chernoff bound given in equation~\eqref{eqprobCMUeasy}.

\begin{lemma}
\label{lem:B-Exp}
Let $W(s) \in [a(s) - \delta, a(s) + \delta]$ be independently chosen uniformly at random. If 
$$(B - E[W(X)]) \geq  \sqrt{3\delta k \ln(1/\alpha)},$$
where $k=|X|$, then $\Pr[W(X)>B] \leq \alpha$.
\end{lemma}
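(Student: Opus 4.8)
The plan is to reduce the claim to the multiplicative Chernoff bound~\eqref{eqprobCMUeasy} of Theorem~\ref{thm:chernoff}, applied not to $W(X)$ itself but to the random fluctuations of the weights around their means. Since each $W(s)$ is uniform on $[a(s)-\delta,a(s)+\delta]$ we have $E[W(s)]=a(s)$, hence $E[W(X)]=\sum_{s\in X}a(s)$ and $W(X)=E[W(X)]+\sum_{s\in X}(W(s)-a(s))$, where the first summand is deterministic and only the second carries randomness. To obtain $[0,1]$-valued summands I would set $Z_s:=W(s)-a(s)+\delta\in[0,2\delta]$, which lie in $[0,1]$ under the standard normalisation of the weights to $[0,1]$ (this forces $\delta\le 1/2$). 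The $Z_s$, $s\in X$, are independent, $E[Z_s]=\delta$, and $Z:=\sum_{s\in X}Z_s=W(X)-E[W(X)]+\delta k$ has $E[Z]=\delta k$, where $k=|X|$.

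Write $D:=B-E[W(X)]$, so that $\{W(X)>B\}=\{Z>D+\delta k\}$. We may assume $\alpha<1$ (otherwise the claim is trivial), which makes the hypothesis $D\ge\sqrt{3\delta k\ln(1/\alpha)}$ force $D\ge 0$. We may also assume $D<\delta k$: if $D\ge\delta k$, then $W(X)\le\sum_{s\in X}(a(s)+\delta)=E[W(X)]+\delta k\le B$ deterministically, so $\Pr[W(X)>B]=0$. Hence we can write $D+\delta k=(1+\epsilon)E[Z]$ with $\epsilon:=D/(\delta k)\in[0,1)$.

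Now apply Theorem~\ref{thm:chernoff} in the form~\eqref{eqprobCMUeasy} (valid since $\epsilon\le 1$) to the sum $Z$:
\[
  \Pr[W(X)>B]\le\Pr\bigl[Z\ge(1+\epsilon)E[Z]\bigr]\le\exp\!\left(-\frac{\epsilon^{2}E[Z]}{3}\right)=\exp\!\left(-\frac{D^{2}}{3\delta k}\right).
\]
Plugging in the hypothesis $D^{2}\ge 3\delta k\ln(1/\alpha)$ then yields $\Pr[W(X)>B]\le e^{-\ln(1/\alpha)}=\alpha$, as required.

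The algebra here — the recentring/rescaling and the two boundary cases — is routine. The one point that actually matters is to apply the tail bound only to the fluctuation term $Z$, whose mean is $\delta k$, rather than to $W(X)$, whose mean $E[W(X)]=\sum_{s\in X}a(s)$ can be much larger; this is exactly what turns the required slack into $\sqrt{3\delta k\ln(1/\alpha)}$ instead of the weaker $\sqrt{3\,E[W(X)]\ln(1/\alpha)}$ one would get by bounding $W(X)$ directly. The accompanying subtlety is that invoking~\eqref{eqprobCMUeasy} for the $Z_s$ requires $Z_s\in[0,1]$, i.e.\ $\delta\le 1/2$; this is ensured by the normalisation of the weights, and without it one would rescale the $Z_s$ by $2\delta$ and obtain the analogous statement with $6\delta^{2}k$ in place of $3\delta k$.
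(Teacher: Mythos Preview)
Your argument is correct and essentially identical to the paper's own proof: the paper also recenters to $W'(s)=W(s)-a(s)+\delta\in[0,2\delta]$ (your $Z_s$), notes $E[W'(X)]=\delta k$, sets $\epsilon=(B-E[W(X)])/(\delta k)$, disposes of the case $\epsilon>1$ deterministically, and then applies~\eqref{eqprobCMUeasy} and unwinds the algebra. Your remark that the direct application of Theorem~\ref{thm:chernoff} to the $Z_s$ needs $2\delta\le 1$ is in fact more careful than the paper, which applies the bound to variables in $[0,2\delta]$ without comment.
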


\begin{proof}
Every item has an uncertainty of $\delta$. 
Instead of considering $W(s) \in [a(s)-\delta, a(s)+\delta]$ chosen uniformly at random, we can consider $W'(s) \in [0,2\delta]$ chosen uniformly at random and have $W(s) = a(s) -\delta + W'(s)$. 
For a selection $X$ with $|X|=k$ elements, we can therefore write $W(X) = E[W(X)] - \delta k + \sum_{x \in X} W'(X)$. 

We have $E[W'(X)] = \delta k$.
We consider the probability for exceeding this expected value by $\epsilon \delta k$. We set $\epsilon = (B - E[W(X)])/(\delta k)$ which implies $\epsilon \delta k +E[W(X)]=B$.

We investigate
$$
\Pr [W(X) > B] = \Pr [W'(X) > \epsilon \delta k +k\delta].
$$

Note that if $\epsilon = (B - E[W(X)])/(\delta k)> 1$ then $\Pr[W(X)>B]=0$ as all weights being maximal within their range would not exceed the bound $B$. For $\epsilon\leq 1$, we get
\begin{align*}
\Pr[W(X) > B] &= \Pr[W'(X) > \epsilon \delta k +k\delta] \\
&\leq \exp\bigg(-\frac{\epsilon^2 k\delta}{3}\bigg)
\end{align*}

using equation~\eqref{eqprobCMUeasy}.
In order to meet the chance constraint, we require
\begin{eqnarray*}
& & \exp\bigg(-\frac{\epsilon^2 k\delta}{3}\bigg) \leq \alpha\\
& \Longleftrightarrow & -\frac{\epsilon^2 k\delta}{3} \leq \ln(\alpha)\\
& \Longleftrightarrow & \epsilon^2 k\delta \geq 3\ln(1/\alpha)\\
& \Longleftrightarrow & \epsilon^2 \geq (3\ln(1/\alpha))/ (k\delta).
\end{eqnarray*}

This implies that 
$ \epsilon \geq \sqrt{(3\ln(1/\alpha))/ (k\delta)}  
$
meets the chance constraint condition according to the considered Chernoff bound.
Setting $\epsilon = (B - E[W(X)])/(\delta k)$ leads to 
\begin{eqnarray*}
&  &  (B - E[W(X)])/(\delta k) \geq \sqrt{(3\ln(1/\alpha))/ (k\delta)} \\
& \Longleftrightarrow &  (B - E[W(X)]) \geq  \sqrt{3\delta k \ln(1/\alpha)},   
\end{eqnarray*}
which completes the proof.
\end{proof}
 Based on Chebyshev's inequality, we can obtain the following condition for meeting the chance constraint.

\begin{lemma}
\label{lem:C-Exp}
Let $X$ be a solution with expected weight $E[W(X)]$ and variance $\Var[W(X)]$. If
$$
B- E[W(X)] \geq \sqrt{\frac{(1 - \alpha) \Var[W(X)]}{\alpha}}
$$
 then $\Pr[W(X)>B] \leq \alpha$.
\end{lemma}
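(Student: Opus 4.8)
The plan is to apply the one-sided Chebyshev inequality (Theorem~\ref{thm:1s-chebyshev}) directly to the random variable $W(X)$ with deviation parameter $\lambda := B - E[W(X)]$. First I would observe that the hypothesis $B - E[W(X)] \geq \sqrt{(1-\alpha)\Var[W(X)]/\alpha}$ forces $\lambda > 0$ (using $\Var[W(X)] > 0$ and $0 < \alpha < 1$), so Theorem~\ref{thm:1s-chebyshev} applies and yields
$$\Pr[W(X) \geq E[W(X)] + \lambda] \leq \frac{\Var[W(X)]}{\Var[W(X)] + \lambda^2}.$$
Since $E[W(X)] + \lambda = B$ and $\Pr[W(X) > B] \leq \Pr[W(X) \geq B]$, it suffices to show that the right-hand side is at most $\alpha$.

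Next I would verify, by clearing the positive denominator $\Var[W(X)] + \lambda^2$ and rearranging, the chain of equivalences
$$\frac{\Var[W(X)]}{\Var[W(X)] + \lambda^2} \leq \alpha \iff (1-\alpha)\Var[W(X)] \leq \alpha \lambda^2 \iff \lambda^2 \geq \frac{(1-\alpha)\Var[W(X)]}{\alpha}.$$
The last inequality is exactly the square of the hypothesis (both sides being nonnegative, squaring is legitimate), so combining the two displays gives $\Pr[W(X) > B] \leq \alpha$, as claimed.

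I do not expect a genuine obstacle here: the statement is an immediate consequence of Cantelli's inequality together with elementary algebra. The only points needing a moment's care are checking that $\lambda$ is strictly positive so that Theorem~\ref{thm:1s-chebyshev} may be invoked, and noting that squaring both sides of the hypothesis preserves the inequality because both sides are nonnegative.
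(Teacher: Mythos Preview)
Your proof is correct and follows essentially the same route as the paper: apply the one-sided Chebyshev inequality with $\lambda = B - E[W(X)]$ and rearrange the resulting bound $\Var[W(X)]/(\Var[W(X)]+\lambda^2)\le\alpha$ into the stated hypothesis. If anything, your write-up is slightly more careful than the paper's, since you explicitly check $\lambda>0$ and justify the squaring step.
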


\begin{proof}


We have
\begin{align*}
  & \frac{\Var[W(X)]}{\Var[W(X)] + (B- E[W(X)])^2} \leq \alpha \\
  &\Longleftrightarrow  \Var[W(X)]\leq \alpha  (\Var[W(X)] + (B- E[W(X)])^2)\\
  &\Longleftrightarrow  (1- \alpha)  \Var[W(X)]\leq \alpha (B- E[W(X)])^2\\
  & \Longleftrightarrow (B- E[W(X)])^2 \geq \frac{(1-\alpha) \Var[W(X)]}{\alpha}
\end{align*}

This together with Lemma~\ref{lem:C-Exp} implies that the chance constraint is met if
$$B- E[W(X)] \geq \sqrt{\frac{(1-\alpha) \Var[W(X)]}{\alpha}}
$$
holds.
\end{proof}

\section{Uniform IID Weights}
We first study the case that all items have iid weights $W(s) \in [a-\delta,a+\delta]$ ($\delta \le a$). For this case we prove that the greedy algorithm with the Chernoff bound surrogate achieves a $(1-o(1))(1-1/e)$ approximation of the optimal solution for the deterministic setting when $B = \omega(1)$. 
This extends the same bound for the deterministic setting by \cite{DBLP:journals/mp/NemhauserWF78}  to the chance-constrained case.



\begin{theorem}
Let $a>0$ and $0 \leq \delta <a$. Let $W(s) \in [a-\delta, a+\delta]$ be chosen uniformly at random for all $s$. Let $\epsilon(k) = \frac{\sqrt{3\delta k \ln(1/\alpha)}}{a}$ and $k^*$ be the largest integer such that $k+ \epsilon(k) \leq \kopt :=\lfloor B/a \rfloor$.

Then the first $k^*$ items chosen by the greedy algorithm satisfy the chance constraint and are a ${1 - (1/e)\exp(\frac{1+\epsilon(k)}{k^*+1+\epsilon(k)})}$-approximation. For $B = \omega(1)$, this is a $(1-o(1))(1-1/e)$-approximation.
\end{theorem}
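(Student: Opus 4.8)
The plan is to combine the feasibility certificate of Lemma~\ref{lem:B-Exp} with the classical greedy analysis for monotone submodular maximization under a cardinality constraint. Write $S_i$ for the set of the first $i$ elements chosen by the greedy algorithm. Since the weights are iid with mean $a$, we have $E[W(S_i)]=ai$, and — crucially — the surrogate value $\widehat{\Pr}[W(S_i)>B]$ depends only on $i$, not on which items were chosen. First I would observe that, via $\kopt\le B/a$, the condition $i+\epsilon(i)\le\kopt$ (with $\epsilon(i)=\sqrt{3\delta i\ln(1/\alpha)}/a$) rearranges to $B-ai\ge\sqrt{3\delta i\ln(1/\alpha)}$, which is exactly the hypothesis of Lemma~\ref{lem:B-Exp}; hence $\Pr[W(S_i)>B]\le\alpha$, and the Chernoff surrogate — calibrated to this same threshold, and no larger if one uses the sharper form~\eqref{chernoff:exp} instead of~\eqref{eqprobCMUeasy} — is $\le\alpha$ as well. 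Because $i\mapsto i+\epsilon(i)$ is increasing, the definition of $k^*$ gives $i+\epsilon(i)\le k^*+\epsilon(k^*)\le\kopt$ for every $i\le k^*$. So (assuming $|V|\ge k^*$, the interesting case) the greedy algorithm does add its first $k^*$ elements, every intermediate set passes the surrogate test, and in particular $S_{k^*}$ satisfies the chance constraint.

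Next I would run the standard greedy argument against an optimal solution $O$ of the \emph{deterministic} problem, i.e.\ a set maximizing $f$ subject to $|O|\le\kopt$ — this is the deterministic counterpart studied by \cite{DBLP:journals/mp/NemhauserWF78}, since in the deterministic model every item has weight $a$ and the constraint $\sum_{s\in S}a\le B$ reads $|S|\le\kopt$. For $i<k^*$, every $o\in O\setminus S_i$ still lies in $V'$ and, added to $S_i$, gives a set of size $i+1\le k^*$, which by the previous paragraph passes the surrogate test; hence $o$ is an admissible candidate at the step producing $S_{i+1}$, so the greedy marginal gain dominates $f(S_i\cup\{o\})-f(S_i)$. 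Telescoping over $O\setminus S_i$ with monotonicity and submodularity gives the usual
\begin{align*}
f(O)-f(S_i)\ \le\ |O\setminus S_i|\,\bigl(f(S_{i+1})-f(S_i)\bigr)\ \le\ \kopt\,\bigl(f(S_{i+1})-f(S_i)\bigr),
\end{align*}
whence $f(O)-f(S_{i+1})\le(1-1/\kopt)\bigl(f(O)-f(S_i)\bigr)$. Iterating for $i=0,\dots,k^*-1$, using $f(\emptyset)\ge0$ and $1-x\le e^{-x}$, yields
\begin{align*}
f(S_{k^*})\ \ge\ \Bigl(1-(1-1/\kopt)^{k^*}\Bigr)f(O)\ \ge\ \bigl(1-e^{-k^*/\kopt}\bigr)f(O).
\end{align*}

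To finish I would put this in the stated form using maximality of $k^*$: since $k^*$ is the largest integer with $k+\epsilon(k)\le\kopt$, we have $\kopt<(k^*+1)+\epsilon(k^*+1)$, so $k^*/\kopt>1-\tfrac{1+\epsilon(k^*+1)}{k^*+1+\epsilon(k^*+1)}$ and therefore $e^{-k^*/\kopt}<e^{-1}\exp\!\bigl(\tfrac{1+\epsilon(k^*+1)}{k^*+1+\epsilon(k^*+1)}\bigr)$, which is the claimed ratio (with $\epsilon$ taken at the index $k^*+1$ forced by maximality). For the asymptotic statement, $B=\omega(1)$ forces $\kopt\to\infty$; since $\epsilon(k)=\Theta(\sqrt k)$, the sandwich $k^*+\epsilon(k^*)\le\kopt<(k^*+1)+\epsilon(k^*+1)$ gives $\kopt=k^*+O(\sqrt{k^*})$, hence $k^*\to\infty$ and $\epsilon(k^*+1)=o(k^*)$, so $\tfrac{1+\epsilon(k^*+1)}{k^*+1+\epsilon(k^*+1)}=o(1)$ and the ratio becomes $1-(1+o(1))/e=(1-o(1))(1-1/e)$.

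The routine ingredients (telescoping submodular marginals, the $1-x\le e^{-x}$ estimates, the $\Theta(\sqrt k)$ behaviour of $\epsilon$) are standard; the only real obstacle is the feasibility bookkeeping: one must be explicit about which deterministic problem the approximation is measured against, and must verify that throughout the first $k^*$ greedy steps every element of the deterministic optimum remains an admissible candidate — precisely what the choice of $k^*$ via $k+\epsilon(k)\le\kopt$ secures. A secondary nuisance is keeping the two objects called ``$\epsilon$'' — the quantity $(B-E[W(X)])/(\delta k)$ inside the proof of Lemma~\ref{lem:B-Exp} and the $\epsilon(k)$ of the theorem — cleanly separated.
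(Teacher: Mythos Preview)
Your proof is correct and follows essentially the same approach as the paper: both combine Lemma~\ref{lem:B-Exp} for feasibility with the Nemhauser--Wolsey--Fisher greedy bound $f(S_{k^*})\ge(1-(1-1/\kopt)^{k^*})f(OPT)$, then use maximality of $k^*$ to get $\kopt<(k^*+1)+\epsilon(k^*+1)$ and finish via $1-x\le e^{-x}$. You are in fact more explicit than the paper about two points it leaves implicit --- that every intermediate set $S_i$ with $i\le k^*$ passes the surrogate test, and that each element of the deterministic optimum remains an admissible greedy candidate --- and you correctly flag that the approximation ratio naturally involves $\epsilon(k^*+1)$ rather than an unspecified $\epsilon(k)$.
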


\begin{proof}
Let $\kopt=\lfloor B/a \rfloor$ be the number of elements that are contained in an optimal solution $OPT$ in the case that the weights are deterministic and attain the value $a$. 

Having produced a solution with $k$ elements following the greedy procedure, we have obtained a solution $X$ where
\[
f(X) \geq (1 - (1- 1/\kopt)^k) \cdot f(OPT)
\]
due to an inductive argument given by \cite{DBLP:journals/mp/NemhauserWF78}.

We now give a lower bound on $k$ using Chernoff bound as a surrogate.
Let $X$ be a set of selected items containing $k=|X|$ elements and $E[X]=\sum_{x \in X} a(x)$ be its expected weight, $\delta$ be the uncertainty common to all items.

Since all items have the same expected weight $a$, we have $E[W(X)] = ak$.
Using Lemma~\ref{lem:B-Exp}, the chance constraint is met if
$(B - ak) \geq \sqrt{3\delta k \ln(1/\alpha)}$.
We have $\kopt = \lfloor B/a \rfloor$ for the number of elements that could be added if the weights were deterministic. So any $k$ with
$
 k+ \frac{\sqrt{3\delta k \ln(1/\alpha)}}{a}   \leq \kopt
$
is feasible when using the Chernoff bound.

Let
\begin{equation}
    k^*= \max \left\{k \,\middle|\,  k+ \frac{\sqrt{3\delta k \ln(1/\alpha)}}{a}   \leq \kopt \right\}. \label{kstar}
\end{equation}
Then
$$
 \kopt<  (k^*+1)+ \frac{\sqrt{3\delta (k^*+1) \ln(1/\alpha)}}{a} =:\beta(k^*).
$$

Let $X^*$ be a solution with $k^*$ elements constructed by the greedy algorithm. Using the well-known estimate $(1+x) \le e^x$, we bound $f(X^*)$ from below by 

\begin{align*}
  (1 - &(1- 1/\kopt)^{k^*}) \cdot f(OPT)\\
 \geq &  \left(1 - \left(1- \frac{1}{\beta(k^*)}\right)^{k^*}\right) \cdot f(OPT)\\
 \geq & \left(1-\exp\left(-\frac{k^*}{\beta(k^*)}\right)\right) \cdot f(OPT)\\
 = & \left(1-\exp\left(-\frac{k^*}{k^* + 1 + \epsilon(k^*+1)}\right)\right) \cdot f(OPT)\\
 = & \left(1-\frac 1e \exp\left(\frac{1 + \epsilon(k^*+1)}{k^* + 1 + \epsilon(k^*+1)}\right)\right) \cdot f(OPT).
\end{align*}
When $k^* = \omega(1)$, the $\exp(\cdot)$ expression is $(1+o(1))$, yielding the asymptotic part of the claim.
\end{proof}

For comparison, we now determine what can be obtained from using a surrogate based on Chebyshev's inequality. This bound is weaker for small values of $\alpha$, but can be better for larger values of $\alpha$ (depending on the other constants involved). 

We observe that $\Var[W(X)] = |X|\cdot \delta^2/3$. Defining $\tilde\epsilon(k) = \frac{\sqrt{(1-\alpha)k\delta^2}}{\sqrt{3\alpha}a}$ and replacing equation~\eqref{kstar} by 
$$    
k^*= \max \left\{k \mid  k+ \tilde\epsilon   \leq \kopt \right\}
$$
our proof above yields the following theorem.

\begin{theorem}
Let $a>0$ and $0 \leq \delta <a$. Let $W(s) \in [a-\delta, a+\delta]$ be chosen uniformly at random for all $s$. Let $\tilde\epsilon(k) = \frac{\sqrt{(1-\alpha)k\delta^2}}{\sqrt{3\alpha}a}$ and $k^*$ be the largest integer such that $k+ \tilde\epsilon(k) \leq \kopt :=\lfloor B/a \rfloor$.

Then the first $k^*$ items chosen by the greedy algorithm satisfy the chance constraint and are a ${1 - (1/e)\exp(\frac{1+\tilde\epsilon(k)}{k^*+1+\tilde\epsilon(k)})}$-approximation. For $B = \omega(1)$, this is a $(1-o(1))(1-1/e)$-approximation.
\end{theorem}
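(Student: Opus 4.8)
The plan is essentially to re-run the proof of the previous theorem with the Chernoff surrogate (Lemma~\ref{lem:B-Exp}) replaced by the Chebyshev surrogate (Lemma~\ref{lem:C-Exp}) and with $\epsilon$ replaced throughout by $\tilde\epsilon$. I would isolate the three places where one must check that the substitution goes through: the feasibility condition (which produces the analogue $\tilde\beta$ of $\beta$), the Nemhauser--Wolsey--Fisher approximation estimate, and the asymptotic simplification.

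\textbf{Feasibility.} Using the variance $\Var[W(X)] = |X|\delta^2/3$ recorded above and $E[W(X)] = ak$ for a size-$k$ selection, Lemma~\ref{lem:C-Exp} guarantees $\Pr[W(X) > B] \le \alpha$ as soon as $B - ak \ge \sqrt{(1-\alpha)k\delta^2/(3\alpha)} = a\,\tilde\epsilon(k)$, i.e.\ as soon as $k + \tilde\epsilon(k) \le B/a$. Since $\kopt = \lfloor B/a\rfloor \le B/a$ and $k \mapsto k + \tilde\epsilon(k)$ is increasing, every $k \le k^*$ gives $k + \tilde\epsilon(k) \le \kopt \le B/a$ and hence a feasible set; moreover the Chebyshev surrogate value depends only on the cardinality of the current set (since both $\Var$ and $E$ of $W(S)$ do), so the sets produced in the first $k^*$ greedy steps are all feasible. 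This proves the first assertion, and by maximality of $k^*$ it also gives $\kopt < (k^*+1) + \tilde\epsilon(k^*+1) =: \tilde\beta(k^*)$, the exact analogue of $\beta(k^*)$.

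\textbf{Approximation ratio.} The Nemhauser--Wolsey--Fisher inductive bound gives $f(X^*) \ge (1 - (1-1/\kopt)^{k^*})\,f(OPT)$ for the $k^*$-element greedy solution $X^*$. Substituting $\kopt < \tilde\beta(k^*)$, applying $1+x \le e^x$, and simplifying $k^*/\tilde\beta(k^*) = 1 - (1+\tilde\epsilon(k^*+1))/(k^*+1+\tilde\epsilon(k^*+1))$ reproduces, line by line, the computation in the Chernoff proof and yields the claimed ratio $1 - (1/e)\exp\!\bigl((1+\tilde\epsilon(k^*+1))/(k^*+1+\tilde\epsilon(k^*+1))\bigr)$. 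The one structural difference is that Cantelli's inequality carries no restriction analogous to $\epsilon \le 1$, so the case split on $\epsilon > 1$ appearing in the earlier proof is simply not needed here.

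\textbf{Asymptotics, and the main obstacle.} If $B = \omega(1)$ then $\kopt = \lfloor B/a\rfloor = \omega(1)$, and (treating $a,\delta,\alpha$ as constants, so $\tilde\epsilon(k) = \Theta(\sqrt{k})$, or $\tilde\epsilon \equiv 0$ when $\delta = 0$) the inequality $k + \tilde\epsilon(k) \le \kopt$ still holds for every $k$ up to $\kopt - \Theta(\sqrt{\kopt}) = \omega(1)$, whence $k^* = \omega(1)$. Then the exponent $(1+\tilde\epsilon(k^*+1))/(k^*+1+\tilde\epsilon(k^*+1)) \to 0$, the $\exp(\cdot)$ factor is $1+o(1)$, and the bound reduces to $(1-o(1))(1-1/e)$. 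There is no genuine difficulty in any of this; the only point needing a little care --- and the closest thing to an obstacle --- is exactly this last step, namely checking that $\tilde\epsilon$ grows only like $\sqrt{k}$, since that is what lets $k^*$ inherit the $\omega(1)$ growth of $\kopt$ and hence makes the correction factor vanish.
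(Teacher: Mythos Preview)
Your proposal is correct and follows exactly the paper's own approach: the paper does not give a separate proof but simply observes that $\Var[W(X)] = |X|\cdot \delta^2/3$, defines $\tilde\epsilon$, replaces equation~\eqref{kstar} accordingly, and states that ``our proof above yields the following theorem.'' Your write-up is in fact more careful than the paper's, as you explicitly verify the feasibility step via Lemma~\ref{lem:C-Exp}, note that the $\epsilon \le 1$ case split is unnecessary for Cantelli, and spell out why $k^* = \omega(1)$ when $B = \omega(1)$.
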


Note that the main difference between the Chernoff bound and Chebyshev's inequality lies in the confidence level of $\alpha$ that needs to be achieved as the equation using Chernoff only increases logarithmically with $1/\alpha$, whereas the one based on Chebyshev's inequality increases with the square root of $1/\alpha$.

We note that, in principle, Chebyshev's inequality does not require that the items are chosen independently. We can use Chebyshev's inequality and the approach above whenever we can compute the variance. 

\section{Uniform Weights with the Same Dispersion}
\ignore{For independently chosen uniform weights $W(s) \in [a(s)-\epsilon,a(s)+\epsilon]$ ($\epsilon \le \min_{v \in V} a(v)$), a generalized greedy algorithm taking into account the ratio of profit gain vs. expect contribution to the chance constraint achieves a $(1-1/\sqrt{e})$ approximation. \carola{auch hier noch Schmutzterm-Faktoren einfuegen} \carola{ggf bekommen wir auch $1-1/e$ hin mit dem Algo von Khuller-IPL bzw Srividenko 2004} \carola{Erklaeren wie cool es ist, dass wir den normalen generalized Greedy Algo nehmen koennen und nur im Nenner die Kosten durch erwartete Kosten ersetzen muessen}}

We now consider the case that the items may have different random weights $W(s) \in [a(s)-\delta, a(s)+\delta]$. However, we still assume the weights are chosen independently and uniformly at random. We also assume that the uncertainty $\delta$ is the same for all items. 

Let $a_{\max} = \max_{s\in V} a(s)$. We assume that $\frac{a_{\max}+ \delta - B}{2\delta} \leq \alpha$
holds. This means that every single item fulfills the chance constraint. Note that items that would not fulfill this condition could be filtered out in a preprocessing step as they can not be part of any feasible solution. Furthermore, we assume that $B=\omega(1)$ grows with the input size. The following theorem extends the results of Theorem $3$ by \cite{DBLP:conf/kdd/LeskovecKGFVG07} to the chance-constrained setting.

\begin{theorem}\label{thm:idchernoff}
For all $s \in V$ let $W(s) \in [a(s)-\delta, a(s)+\delta]$ with $a(s)>0$ and $0 \leq \delta < \min a(s)$. 
If $\frac{a_{\max}+ \delta - B}{2\delta} \leq \alpha$ and $B = \omega(1)$,
then the solution obtained by the Generalized Greedy algorithm GGA using Chernoff bound~\eqref{eqprobCMUeasy} as surrogate for the chance constraint is a $(1/2-o(1))(1-1/e)$-approximation.
\end{theorem}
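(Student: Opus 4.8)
The plan is to transplant the classical analysis of the generalized greedy algorithm for monotone submodular maximization under a knapsack constraint (Khuller--Moss--Naor~\cite{DBLP:journals/ipl/KhullerMN99}; Leskovec et al.~\cite{DBLP:conf/kdd/LeskovecKGFVG07}, Theorem~3) into the chance-constrained setting, so that we lose only $1-o(1)$ factors. I will compare the output of GGA against $OPT$, an optimal solution of the \emph{deterministic} problem with budget $B$ and each weight fixed to $a(s)$; then $E[W(OPT)]=\sum_{s\in OPT}a(s)\le B$, and since GGA always returns a chance-constraint-feasible set this is the stronger claim. (The chance-constrained optimum differs only in the ``effective budget'' by a $1+o(1)$ factor under the stated hypotheses, so nothing below changes if one prefers that comparison.)

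The first step is to rephrase feasibility. From the manipulations inside the proof of Lemma~\ref{lem:B-Exp}, GGA's Chernoff surrogate obeys $\widehat{\Pr}[W(X)>B]\le\alpha$ \emph{exactly} when $E[W(X)]\le B-\sqrt{3\delta|X|\ln(1/\alpha)}$ (and also whenever even the all-maximum realization stays below $B$) --- i.e.\ the surrogate test is a two-sided condition, not merely a sufficient one. Next I would observe that any chance-constraint-feasible set $X$ has $|X|\le k_{\max}:=\lceil B/(\min_s a(s)-\delta)\rceil=O(B)$, since otherwise $W(X)$ exceeds $B$ deterministically. Hence, writing $\Delta:=\sqrt{3\delta k_{\max}\ln(1/\alpha)}=O(\sqrt{B\ln(1/\alpha)})=o(B)$ (using $B=\omega(1)$, with $\delta$, the $a(s)$ and $\alpha$ fixed), every set of expected weight at most $B':=B-\Delta=(1-o(1))B$ passes GGA's feasibility test, and conversely a set that GGA rejects has expected weight exceeding $B'$.

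Then I would run GGA (ignoring its final singleton comparison), write $S_0=\emptyset\subseteq S_1\subseteq\dots$ for the accepted sets with $v_i$ the $i$-th accepted element, and take $\ell$ to be the first step at which the current best-ratio candidate $u$ is rejected, with current set $S_{\ell-1}$ (if no rejection ever occurs, GGA returns $f(V)\ge f(OPT)$ and we are done). By the previous paragraph $E[W(S_{\ell-1}\cup\{u\})]>B'$. Since nothing was discarded before step $\ell$, the standard density inequality applies without change: submodularity together with the max-ratio greedy choice gives $f(S_i)-f(S_{i-1})\ge (E[w(v_i)]/B)\,(f(OPT)-f(S_{i-1}))$ for $i\le\ell-1$, and the same with $u$ at the last (virtual) step. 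Telescoping with $1+x\le e^x$ then yields $f(OPT)-f(S_{\ell-1}\cup\{u\})\le \exp(-E[W(S_{\ell-1}\cup\{u\})]/B)\,f(OPT)<\exp(-B'/B)\,f(OPT)=(1/e+o(1))\,f(OPT)$, hence $f(S_{\ell-1}\cup\{u\})\ge(1-o(1))(1-1/e)\,f(OPT)$.

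To finish, split $f(S_{\ell-1}\cup\{u\})=f(S_{\ell-1})+\big(f(S_{\ell-1}\cup\{u\})-f(S_{\ell-1})\big)$: one summand is at least half the total. If it is $f(S_{\ell-1})$, monotonicity makes GGA's accepted output (a superset of $S_{\ell-1}$) good enough. If it is the marginal term, submodularity gives $f(\{u\})\ge f(S_{\ell-1}\cup\{u\})-f(S_{\ell-1})$, and the hypothesis $\tfrac{a_{\max}+\delta-B}{2\delta}\le\alpha$ ensures $\Pr[W(v)>B]\le\alpha$ for \emph{every} single $v$, so $u$ is a legal candidate in line~9 and GGA's returned singleton $\{v^*\}$ satisfies $f(\{v^*\})\ge f(\{u\})$. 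Either way GGA returns a chance-constraint-feasible set of value $\ge(1/2-o(1))(1-1/e)\,f(OPT)$. I expect the main obstacle to be precisely the bookkeeping of the second step: verifying that the surrogate test is a genuine equivalence (so that ``rejected'' really means ``expected weight $>B'$''), and pinning down the regime of $B,\delta,\alpha$ in which $\Delta/B=o(1)$, so that $\exp(-B'/B)$ collapses to $e^{-1}(1+o(1))$; once that is in hand, the density argument and the singleton trick are routine copies of the deterministic proof.
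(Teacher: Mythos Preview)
Your proposal is correct and follows essentially the same route as the paper: both run the Khuller--Moss--Naor/Leskovec density argument up to the first element rejected by the surrogate, observe that at that point the expected weight of the set-plus-rejected-element exceeds $B-O(\sqrt{B})$, telescope to get a $(1-o(1))(1-1/e)$ bound on $f(S_{\ell-1}\cup\{u\})$, and finish with the singleton comparison (legitimized by the single-item feasibility hypothesis). Your introduction of a uniform cardinality bound $k_{\max}$ and a single threshold $B'=B-\Delta$ is a slight presentational streamlining compared to the paper, which carries the rejection index $L$ through the product $\prod_{k=1}^{L+1}(1-a(k)/B)$ directly; otherwise the arguments coincide.
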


\begin{proof}
Let $S$ be the greedy solution constructed by the GGA (Algorithm~\ref{alg:GGA}) and let $v^*$ be the element with the largest function value. Let $T$ be the solution produced by the generalized greedy algorithm in the deterministic setting.
\ignore{The solution produced by the adaptive greedy algorithm  having added $i$ elements in the deterministic setting has quality 
$$
f(X_i) \geq \left( 1 - \prod_{k=1}^i \left(1 - \frac{c_k}{B}\right) \right) f(OPT)
$$
where $B$ is the constraint bound, OPT is an optimal solution for the deterministic problem, and $c_k$ is the (expected) weight of the item added in step $k$.
}

We know that any solution $X$ with 
$E[W(X)] \leq  B - \sqrt{3\delta k \ln(1/\alpha)}$ 
 is feasible. Furthermore, every single item is feasible as 
 $\frac{a_{\max}+ \delta - B}{2\delta} \leq \alpha.$
Let  $\hat{B} = B - \sqrt{3\delta k \ln(1/\alpha)}$. Let $S'=X_L \subseteq T$ be the subset of $T$ consisting of the first $L$ elements exactly as chosen when working with the deterministic bound B. Note that we have $f(S) \geq f(S')$ as $f$ is monotone. If $S'=T$, we know $\argmax_{Y\in \{S,\{v^*\}\}}f(Y)$ is a $(1/2) (1-1/e)$-approximation.

Furthermore, let $v^* \in V \setminus S'$ be the the element with the largest function value. Note that $v^*$ meets the constraint bound based on our assumptions. Let $z = X_{L+1} \setminus X_L \in T$ be the first element of $S$ not added when working with the chance constraint.
If $v^* = z$, then we have $E[W(S)] + a(v^*) \geq \hat{B}$.
As $v^*$ is the single element with the largest function value, we have $f(v^*) \geq f(z)$.


Let $x$ be the element added in iteration $i$ to the partial solution $X_{i-1}$ in order to obtain $X_{i} \subseteq S'$. Furthermore, let $OPT$ be an optimal solution for the deterministic setting. Then following \cite{DBLP:conf/kdd/LeskovecKGFVG07} and using the expected cost value instead of the deterministic ones, we have
\begin{equation*}
f(X_i) - f(X_{i-1})
\geq   \frac{a(x)}{ c(OPT)} \cdot (f(OPT) - f(X_{i-1})) \\
\end{equation*}
which by induction gives
$$f(X_i) \geq  \bigg[1- \prod_{k=1}^{i} \left(1- \frac{a(k)}{B}\right) \bigg] \cdot f(OPT).
$$

\ignore{
This results in 
\begin{eqnarray*}
f(S') +f(v^*) & \geq & \left[1- \prod_{k=1}^{L+1} \left(1- \frac{a(k)}{B}\right) \right] \cdot f(OPT)\\
& \geq & \left[1- \left(1 - \frac{1}{L+1}\right)^{L+1} \right]f(OPT)
\end{eqnarray*}

Need to have relation of $OPT$ to $OPT_{\hat{B}}$ and their f and c values.
}


Every element added to $S'$ would have also been chosen when working with the deterministic bound $B$. Furthermore, the single element $v^*$ providing the largest possible profit is also accepted due to our assumption on $B$ and we have $f(v^*) \geq f(z)$.

We have
\begin{eqnarray*}
f(S') +f(z) & \geq & \bigg[1- \prod_{k=1}^{L+1} \left(1- \frac{a(k)}{B}\bigg) \right] \cdot f(OPT),
\end{eqnarray*}
where we take $a(L+1)$ to be $a(z)$.
Furthermore, we have
\begin{align*}
E[W(S')] + a(z) & = a(z) + \sum_{s \in S'} a(s) \geq \hat{B} \\
&\geq  B - \sqrt{3\delta L \ln(1/\alpha)}
\end{align*}
as $z$ is the first element of $S$ not added under the chance constraint. This implies
\begin{align*}
&f(S) + f(v^*) \geq f(S') + f(v^*) \geq f(S') + f(z)\\
& \geq  \left[1 -\prod_{k=1}^{L+1} \left(1- \frac{a(s)}{B}\right) \right]  f(OPT)\\
& \geq  \left[1 -\prod_{k=1}^{L+1} \left(1- \frac{B-\sqrt{3 \delta (L+1) \ln(1/\alpha)}}{(L+1)B}\right) \right]  f(OPT)\\
& \geq  \left[1 - \left(1- \frac{1}{L+1} +\frac{\sqrt{3 \delta \ln(1/\alpha)}}{\sqrt{L+1}\cdot B}\right)^{L+1} \right]  f(OPT)
\end{align*}
Again, we assume that $\alpha$ and $\delta$ are constants and $B=\omega(1)$ is growing with the input size. This implies
$
f(S) + f(v^*) \geq (1-o(1)) (1-1/e)
$
and therefore 
$\max_{Y\in \{S,\{v^*\}\}}f(Y) \geq (1/2-o(1)) (1-1/e)\cdot f(OPT).$
\ignore{

Put some assumption to get (1/2) (1-1/e) approximation!!!!

Assume that $B \geq c n^{\epsilon} \cdot a_{\max}$, $c>0$ a constant, which implies that any subset of at least $c n^{\epsilon}$ elements is feasible for the bound $B$.
}
\end{proof}

Using Chebyshev's inequality with $\Var[W(X)] = |X|\cdot \delta^2/3$ and replacing
$
\sqrt{3 \delta (L+1) \ln(1/\alpha)}
$
by
$
\sqrt{\frac{(1 - \alpha) (L+1) \delta^2}{3\alpha}} 
$
we obtain the following theorem.

\begin{theorem}
In the situation of Theorem~\ref{thm:idchernoff}, if $\frac{a_{\max}+ \delta - B}{2\delta} \leq \alpha$ and $B = \omega(1)$,
then the solution obtained by the Generalized Greedy algorithm using Chebyshev's inequality as surrogate for the chance constraint is a $(1/2-o(1))(1-1/e)$-approximation.
\end{theorem}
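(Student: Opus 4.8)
The plan is to run the proof of Theorem~\ref{thm:idchernoff} essentially verbatim, replacing the Chernoff feasibility certificate of Lemma~\ref{lem:B-Exp} by the Chebyshev one of Lemma~\ref{lem:C-Exp}. First I would record the only distribution-specific input: since the $W(s)$ are independent and uniform on $[a(s)-\delta,a(s)+\delta]$, each contributes variance $\delta^2/3$, so $\Var[W(X)]=|X|\,\delta^2/3$, and Lemma~\ref{lem:C-Exp} then certifies that any set $X$ with $E[W(X)]\le B-\delta\sqrt{(1-\alpha)|X|/(3\alpha)}$ meets the chance constraint. The singleton step (line~9 of Algorithm~\ref{alg:GGA}) is unaffected by the change of surrogate: it uses the \emph{exact} value $\Pr[W(v)>B]\le(a_{\max}+\delta-B)/(2\delta)\le\alpha$, so every singleton, and in particular the element $v^*$ it selects, remains feasible exactly as in the Chernoff analysis, with $f(v^*)=\max_{v\in V}f(v)$.

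With this substitution, the bookkeeping of Theorem~\ref{thm:idchernoff} carries over unchanged. Let $T$ be the deterministic generalized-greedy solution, let $L$ be the number of its elements the GGA adds under the Chebyshev surrogate before the next element $z\in T$ is rejected, and set $\hat B=B-\delta\sqrt{(1-\alpha)L/(3\alpha)}$, so that $E[W(X_L)]\le\hat B$ while $E[W(X_L)]+a(z)\ge\hat B$. The deterministic induction of \cite{DBLP:conf/kdd/LeskovecKGFVG07}, run with expected weights in place of deterministic ones, still gives $f(X_i)\ge\bigl[1-\prod_{k=1}^{i}(1-a(k)/B)\bigr]f(OPT)$; evaluating at $i=L+1$ with $a(L+1):=a(z)$, and using $f(v^*)\ge f(z)$ and monotonicity of $f$, yields $f(S)+f(v^*)\ge f(X_L)+f(z)\ge\bigl[1-\prod_{k=1}^{L+1}(1-a(k)/B)\bigr]f(OPT)$. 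Since $\sum_{k=1}^{L+1}a(k)=E[W(X_L)]+a(z)\ge B-\delta\sqrt{(1-\alpha)(L+1)/(3\alpha)}$, the AM--GM estimate $\prod_k(1-x_k)\le(1-\tfrac1{L+1}\sum_k x_k)^{L+1}$ gives
\[
 f(S)+f(v^*)\ \ge\ \left[ 1 - \left( 1 - \frac{1}{L+1} + \frac{\delta\sqrt{(1-\alpha)/(3\alpha)}}{\sqrt{L+1}\,B} \right)^{L+1} \right] f(OPT).
\]
If instead the GGA exhausts $T$ (the case $S'=T$), then $f(S)\ge f(T)$ by monotonicity and $\max\{f(T),f(v^*)\}$ is already a $\tfrac12(1-1/e)$-approximation by \cite{DBLP:conf/kdd/LeskovecKGFVG07}, so the claim holds a fortiori.

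Finally I would extract the asymptotics exactly as before. Treating $\delta$ and $\alpha$ as constants: if $\delta>0$ then every $a(s)>\delta$ and $T$ has total deterministic weight at most $B$, so $L+1\le B/\delta+1=O(B)$ (and if $\delta=0$ the correction term is identically $0$); hence the perturbation of the exponent is $(L+1)\cdot\frac{\delta\sqrt{(1-\alpha)/(3\alpha)}}{\sqrt{L+1}\,B}=O(\sqrt{L+1}/B)=O(1/\sqrt B)=o(1)$ because $B=\omega(1)$. Using $(1+y)\le e^y$ then gives $\bigl(1-\tfrac1{L+1}+o(1/\sqrt{L+1})\bigr)^{L+1}\le e^{-1+o(1)}=(1+o(1))/e$, so $f(S)+f(v^*)\ge(1-1/e-o(1))f(OPT)$ and therefore $\max_{Y\in\{S,\{v^*\}\}}f(Y)\ge\tfrac12(f(S)+f(v^*))\ge(1/2-o(1))(1-1/e)f(OPT)$. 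The change of surrogate introduces no new analytical difficulty --- the Chebyshev correction $\delta\sqrt{(1-\alpha)(L+1)/(3\alpha)}$ has the same $\Theta(\sqrt{L+1})$ growth in the number of selected elements as the Chernoff correction $\sqrt{3\delta(L+1)\ln(1/\alpha)}$ --- and the one point genuinely requiring care, exactly as in Theorem~\ref{thm:idchernoff}, is confirming that this correction vanishes in the limit, i.e.\ that $L$ cannot grow faster than $O(B)$; the crude bound $L\le B/\delta$ from $\sum_{s\in T}a(s)\le B$ and $a(s)>\delta$ supplies this.
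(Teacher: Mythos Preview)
Your proposal is correct and follows exactly the route the paper takes: the paper's entire argument for this theorem is the one-line remark that one reruns the proof of Theorem~\ref{thm:idchernoff} with $\Var[W(X)]=|X|\delta^2/3$ and the Chernoff correction $\sqrt{3\delta(L+1)\ln(1/\alpha)}$ replaced by the Chebyshev correction $\sqrt{(1-\alpha)(L+1)\delta^2/(3\alpha)}$. Your write-up is in fact more careful than the paper on the asymptotic step, since you explicitly justify $L=O(B)$ via $a(s)>\delta$ and $\sum_{s\in T}a(s)\le B$, a point the paper leaves implicit both here and in Theorem~\ref{thm:idchernoff}.
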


\ignore{
\begin{figure}[t]
\centering
\includegraphics[width=0.34\textwidth]{Experiments/digg2.png}

\vspace{-0.35cm}
\caption{Visualization of social Digg's graph structure~\cite{DBLP:conf/aaai/RossiA15}.}
\label{fig:plot_graph}

\end{figure}
}

\begin{figure*}[t!]
\centering
\rotatebox{0}{\hspace{7mm} Budget = $20$}
\rotatebox{0}{\hspace{23mm} Budget = $50$}
\rotatebox{0}{\hspace{20mm} Budget = $100$}
\rotatebox{0}{\hspace{21mm} Budget = $150$}\\
\rotatebox{90}{\hspace{1.78mm} Chebyshev's inequality}
\rotatebox{90}{\rule{35mm}{1pt}}%
\hspace{0.14cm}
\includegraphics[width=0.22\textwidth]{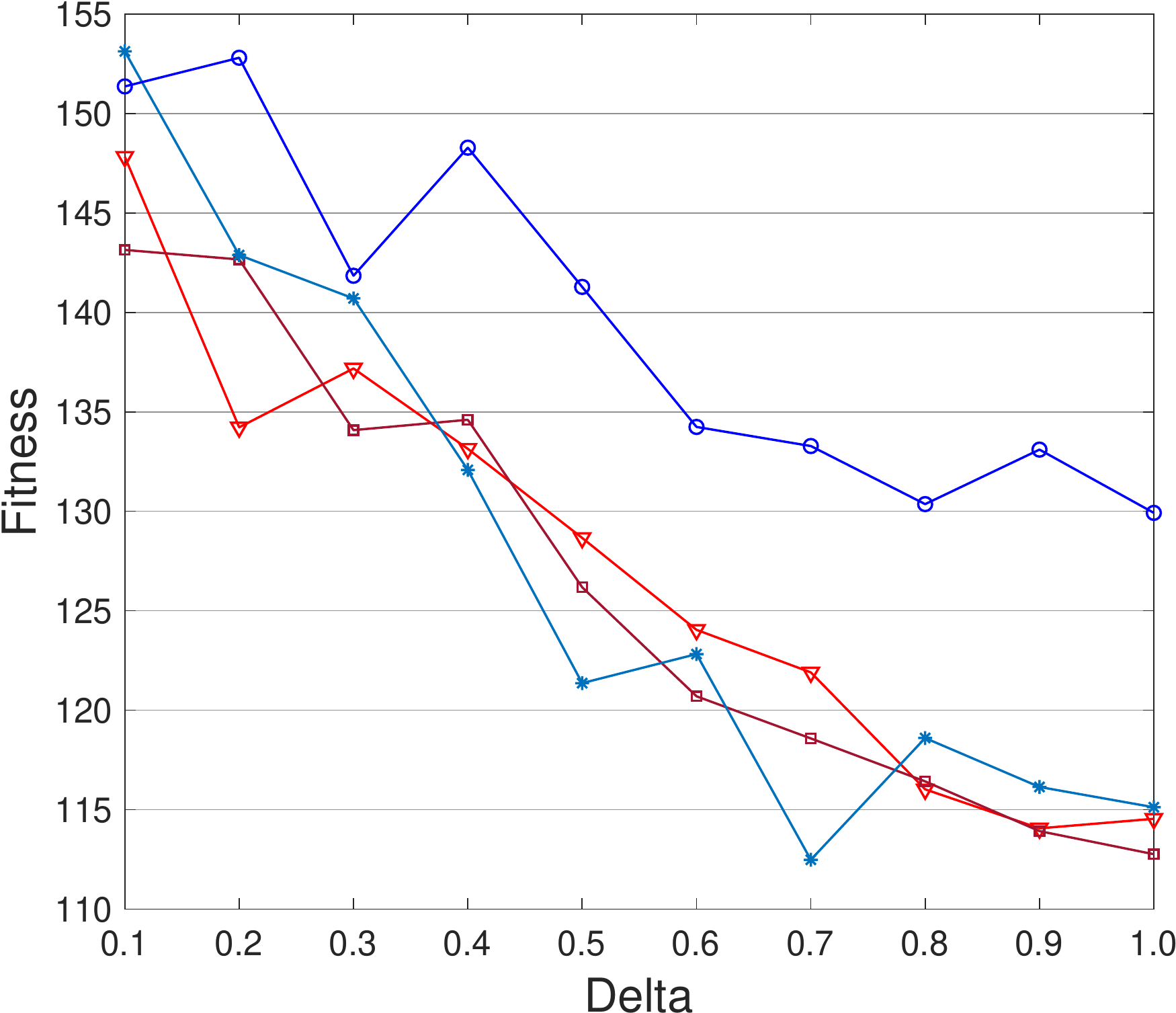}
\includegraphics[width=0.22\textwidth]{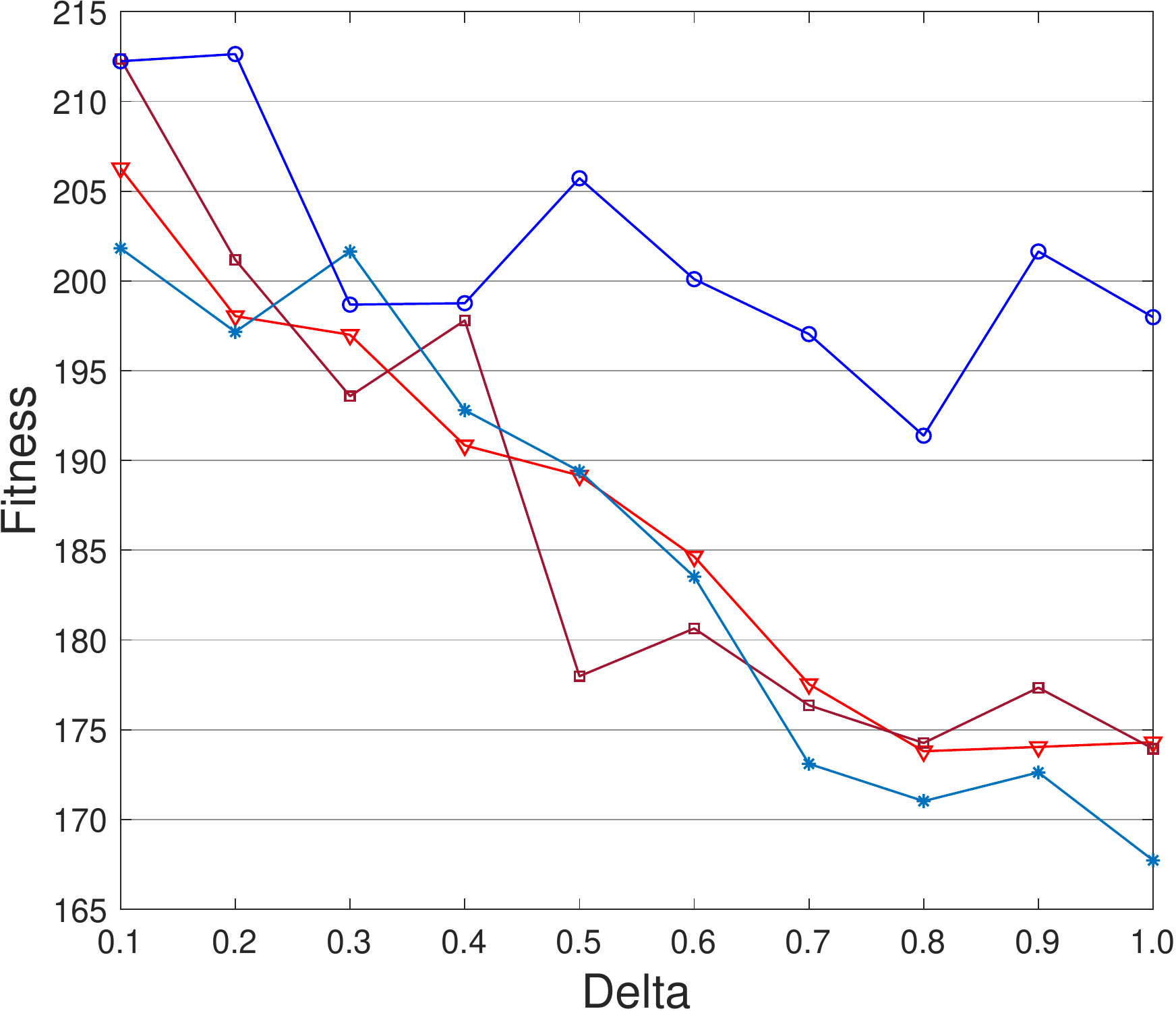}
\includegraphics[width=0.22\textwidth]{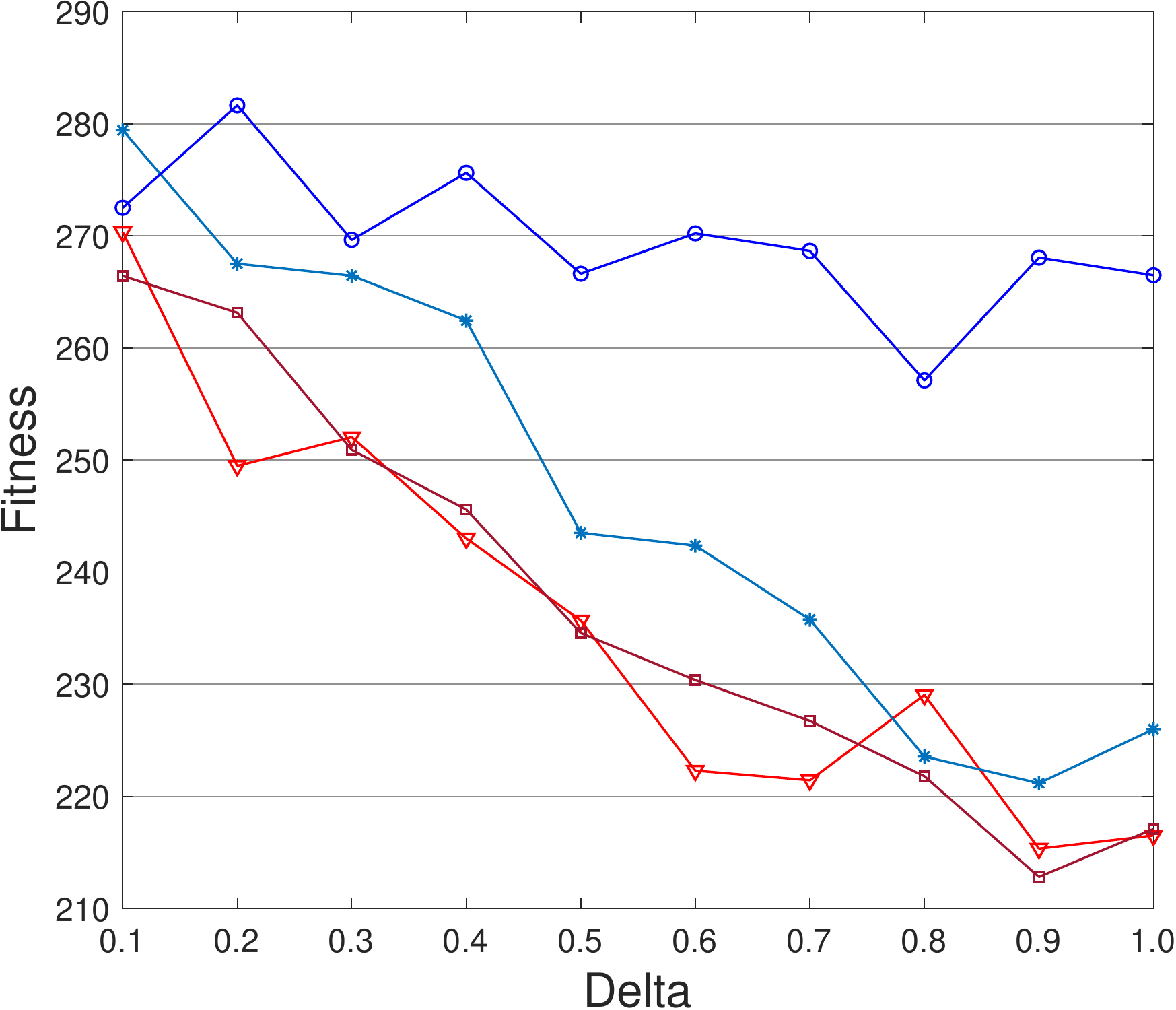}
\includegraphics[width=0.22\textwidth]{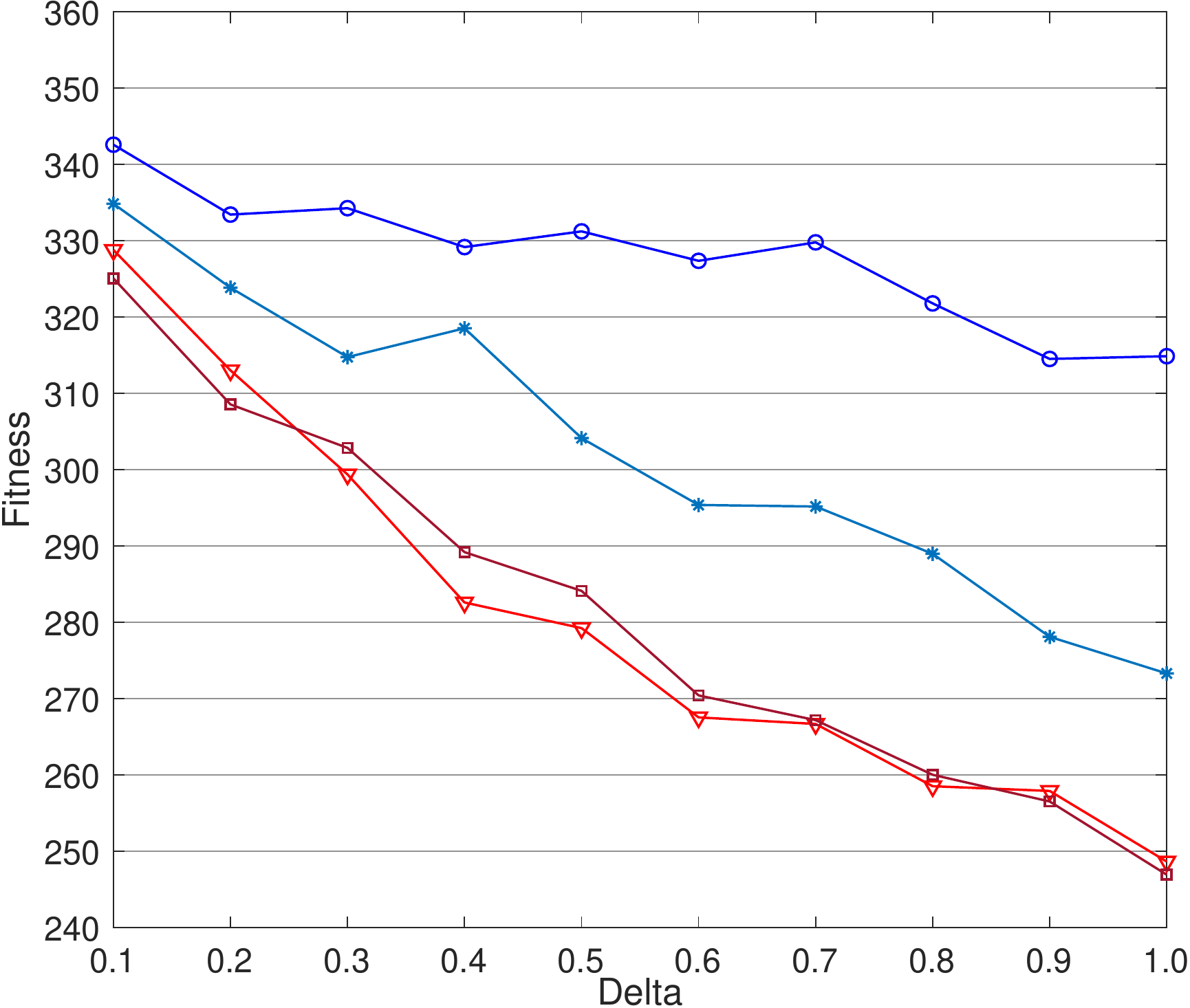}\\
\rotatebox{90}{\hspace{5mm} Chernoff bounds}
\rotatebox{90}{\rule{35mm}{1pt}}%
\hspace{0.12cm}
\includegraphics[ width=0.22\textwidth]{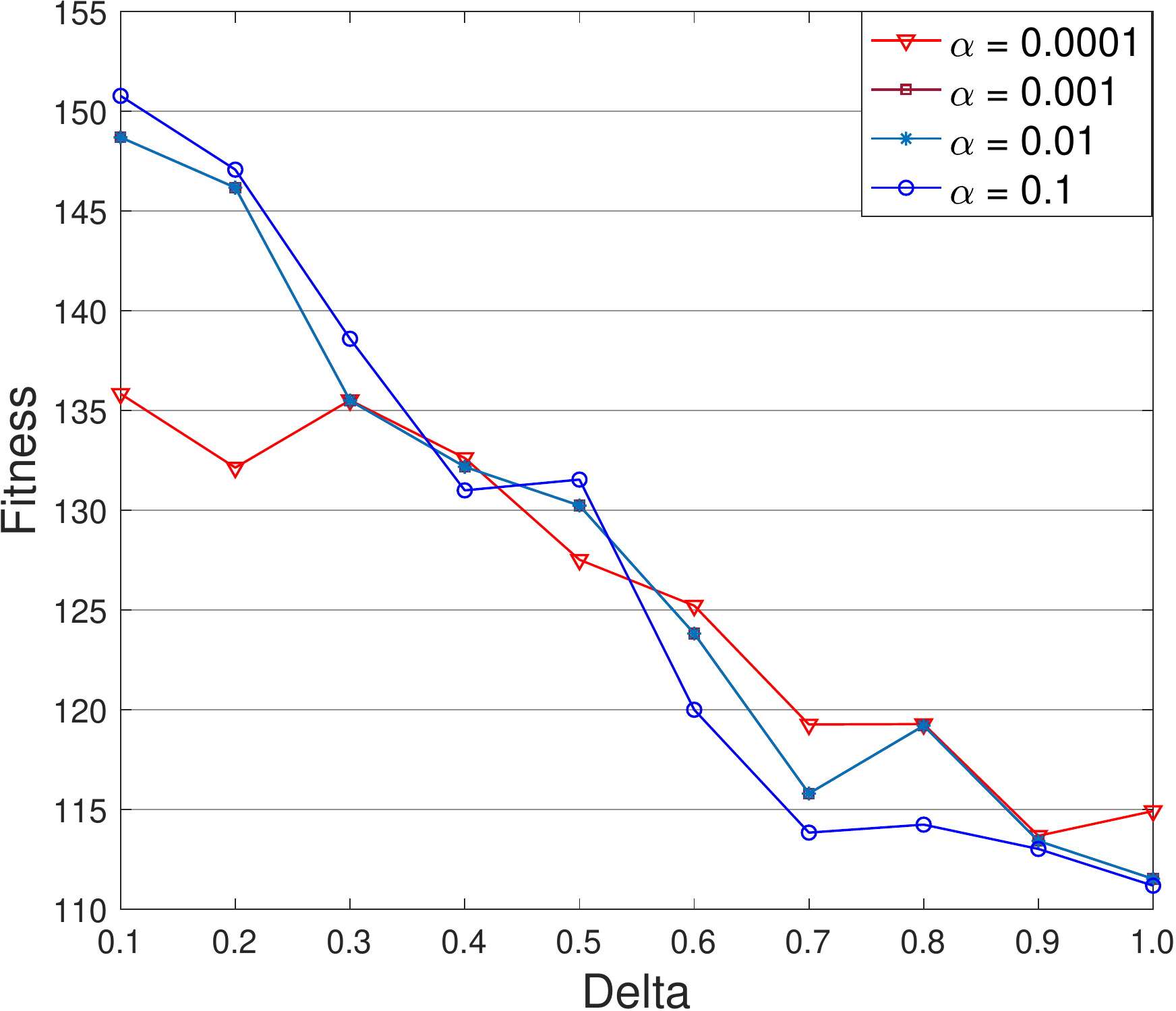}
\includegraphics[ width=0.22\textwidth]{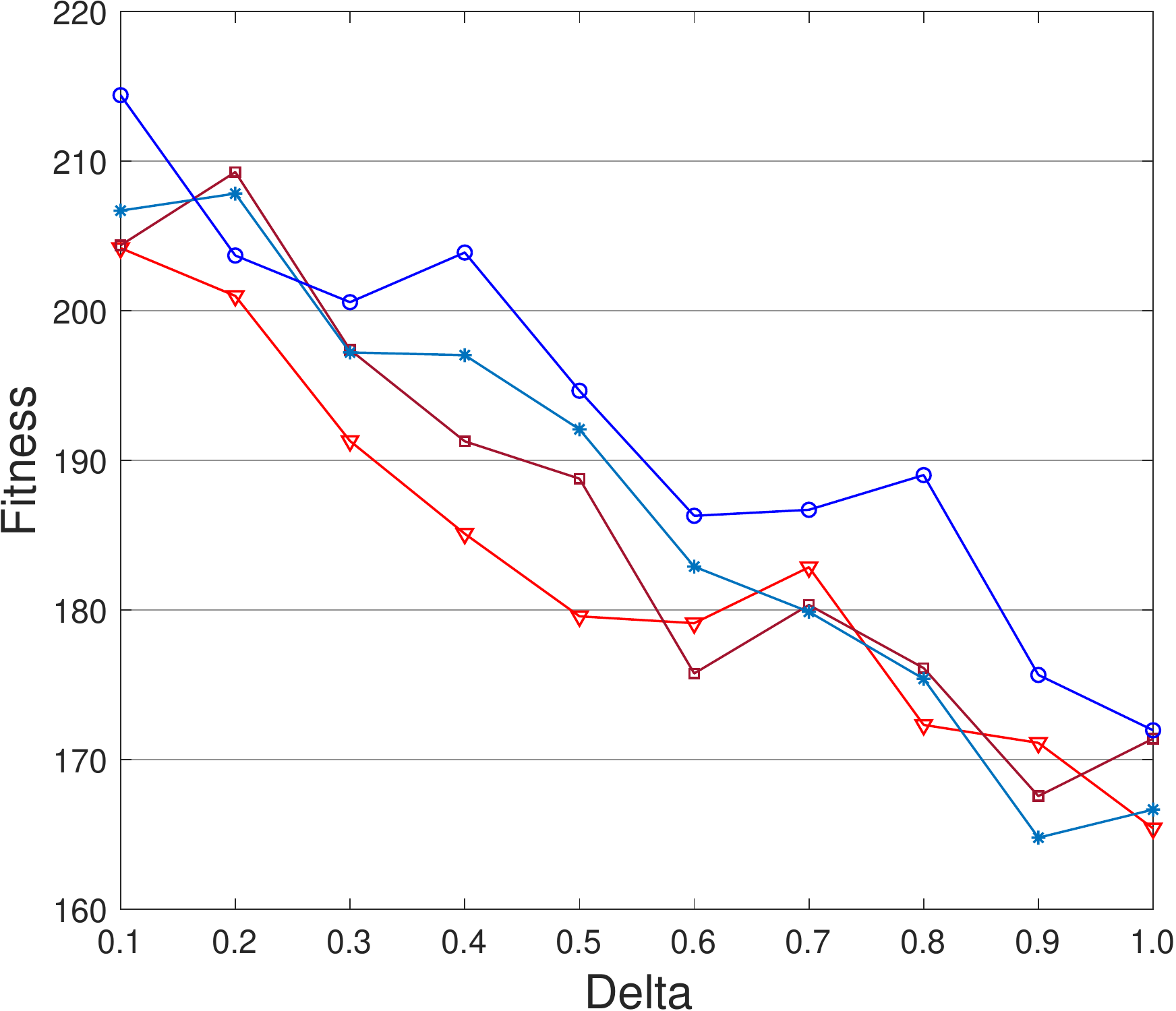}
\includegraphics[ width=0.22\textwidth]{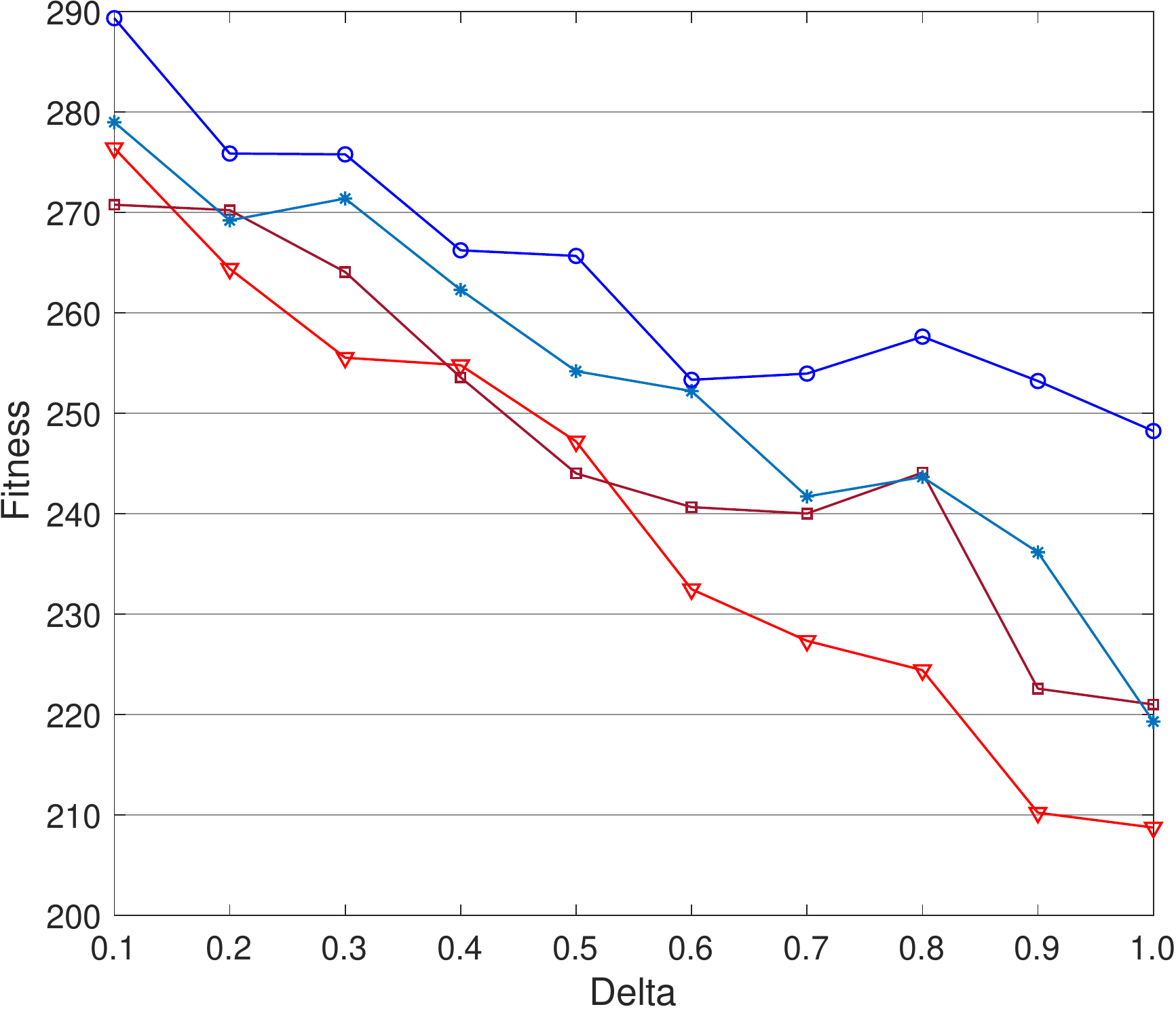}
\includegraphics[ width=0.22\textwidth]{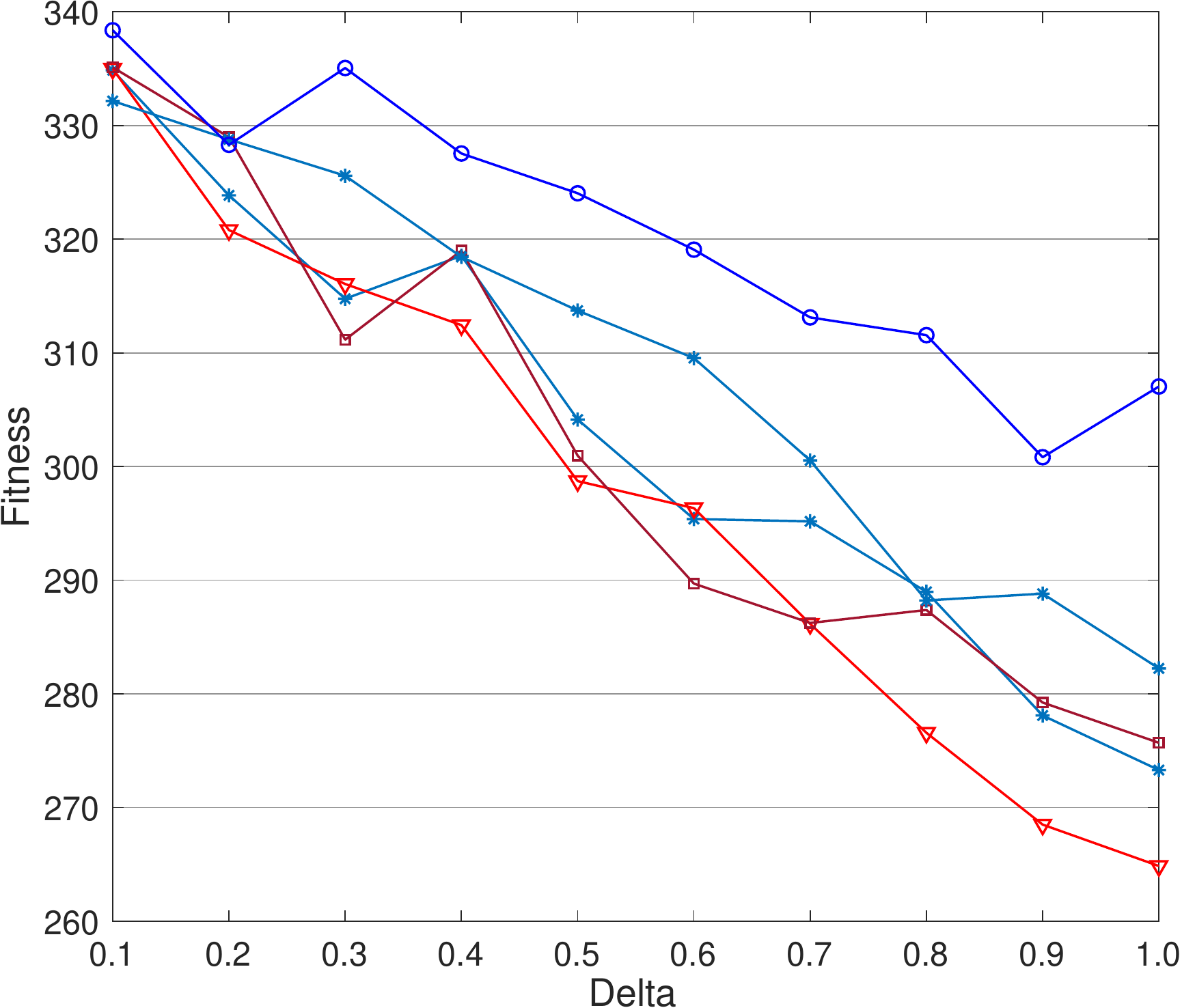}
\vspace{-2mm}
\caption{Function value for budgets $B=20, 50, 100, 150$ (from left to right) using Chebyshev's inequality (top) and Chernoff bound (bottom) for $\alpha$ = 0.1, 0.01, 0.001, 0.0001 with all the expected weights $1$.}
\label{fig:plot}
\end{figure*}


%

\begin{figure*}[t!]
\centering
\includegraphics[width=0.2455\textwidth]{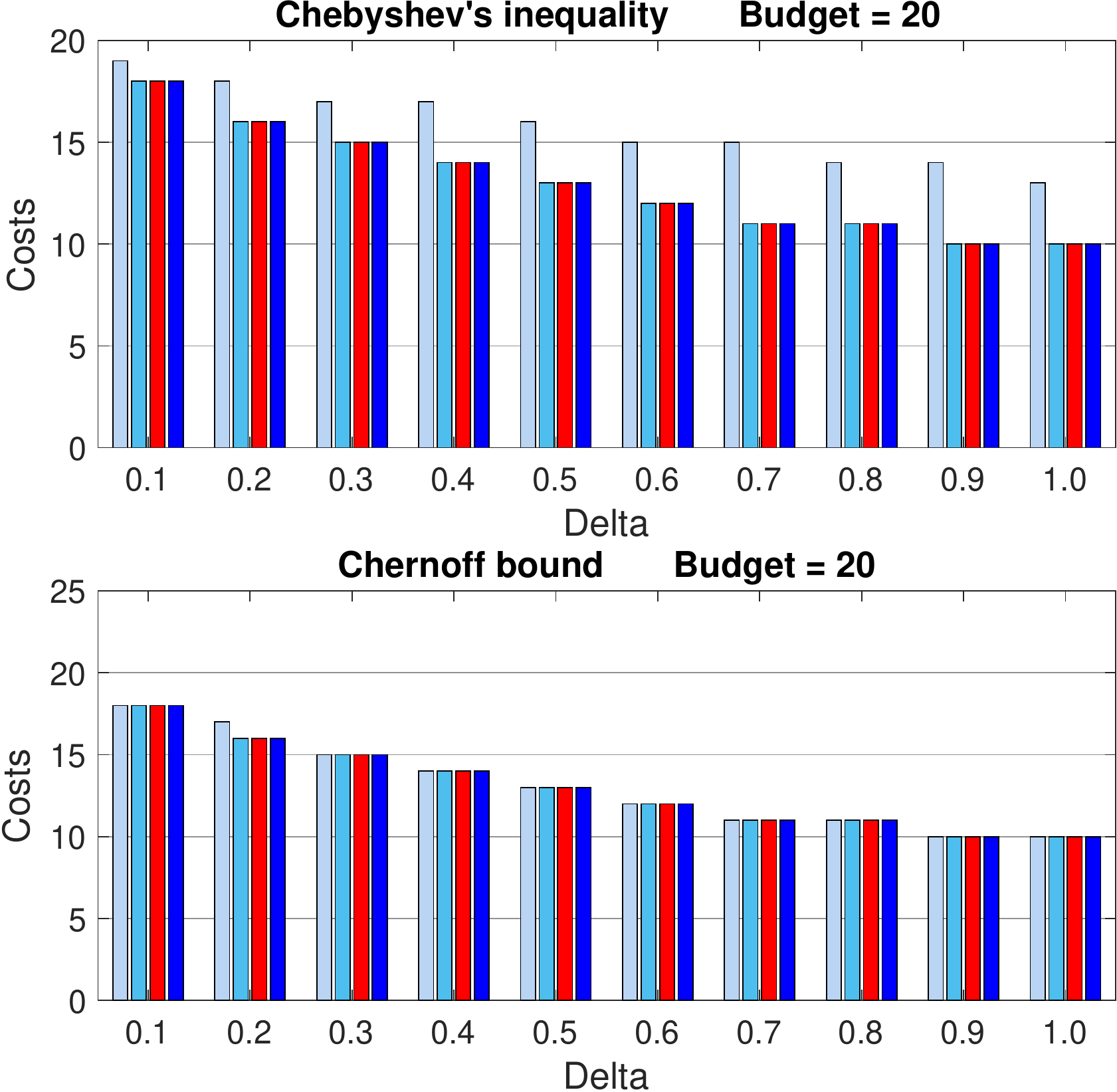}
\includegraphics[width=0.2455\textwidth]{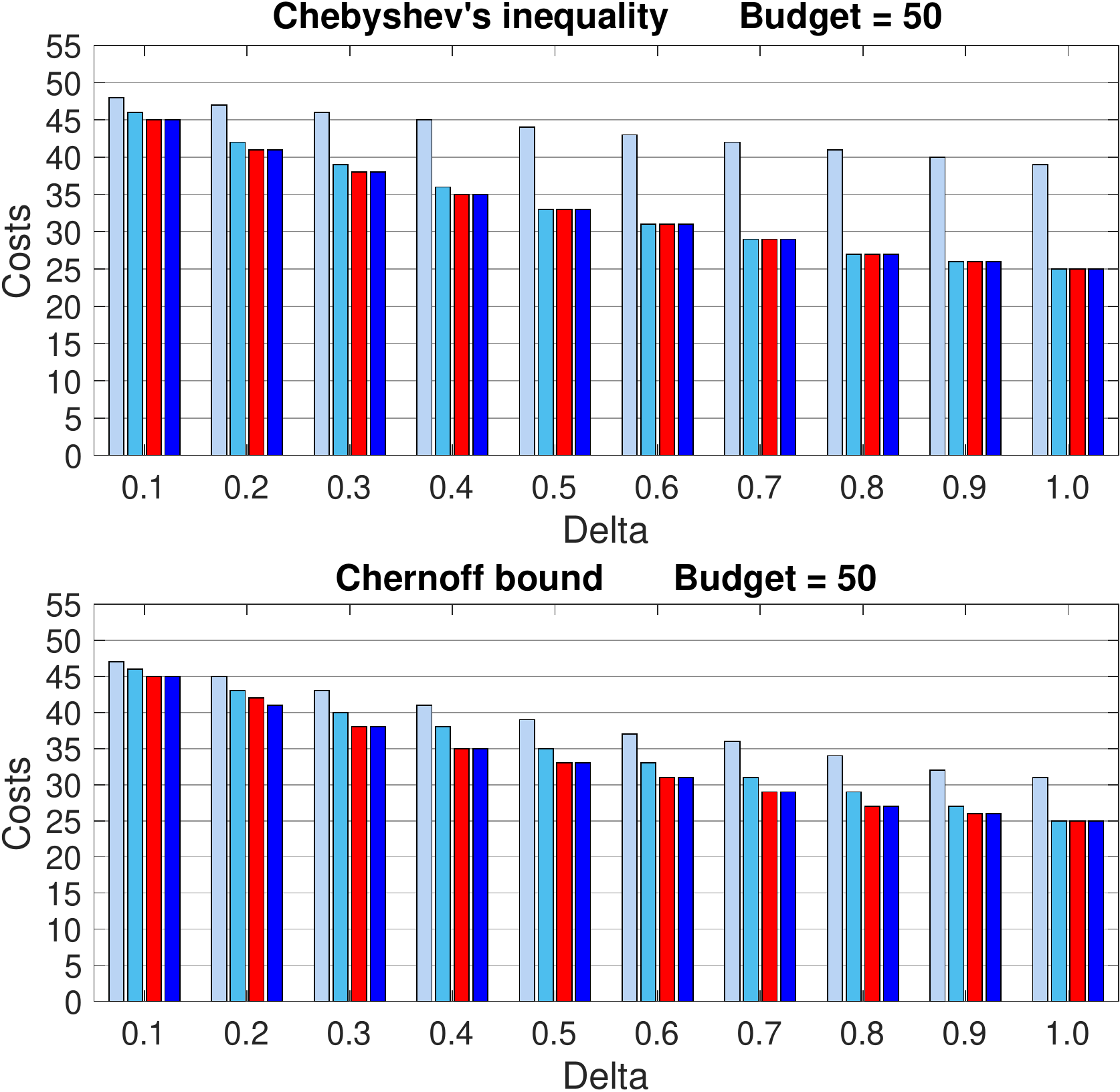}
\includegraphics[width=0.2455\textwidth]{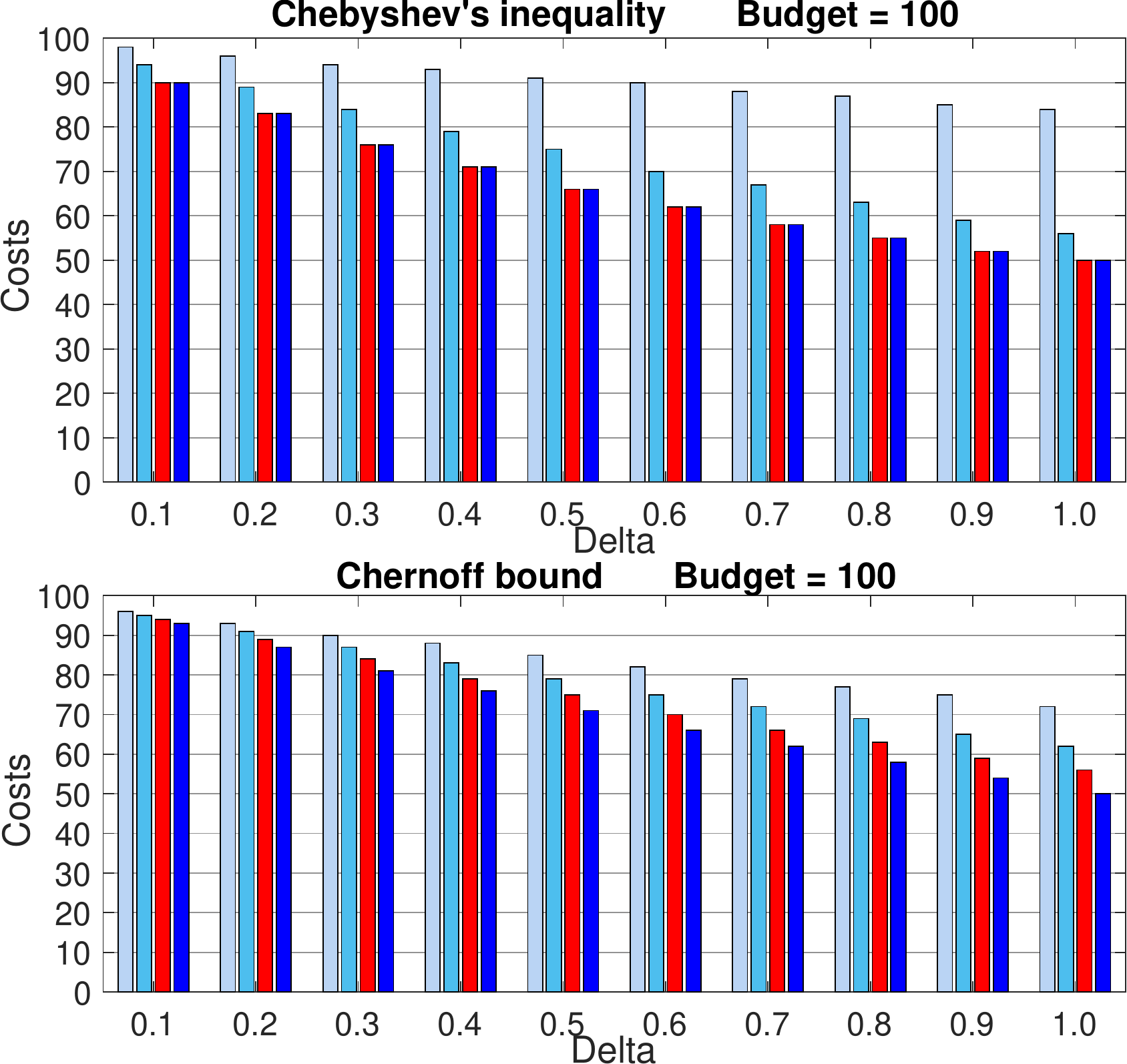}
\includegraphics[width=0.2455\textwidth]{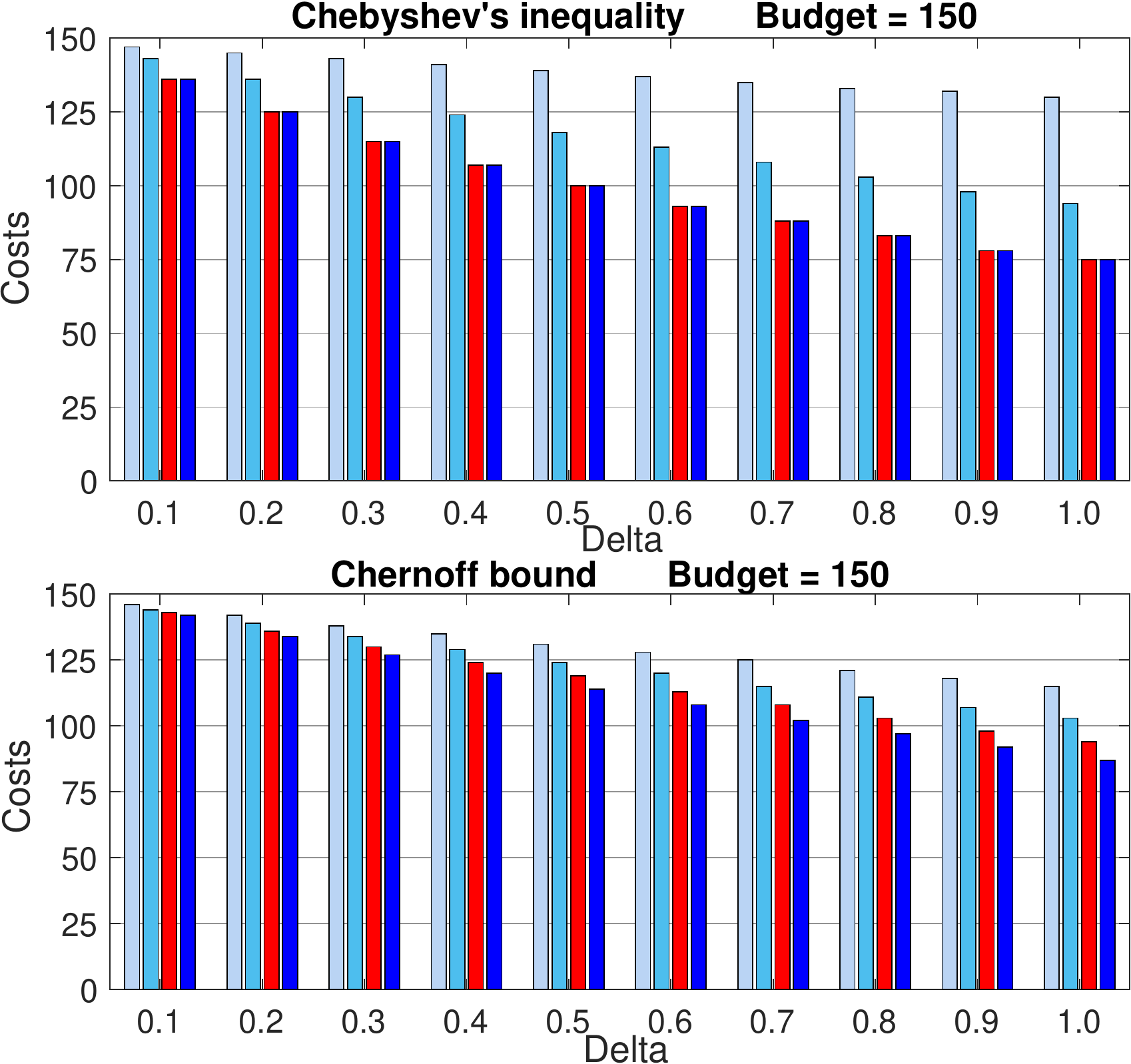}\\
\includegraphics[width=0.4\textwidth]{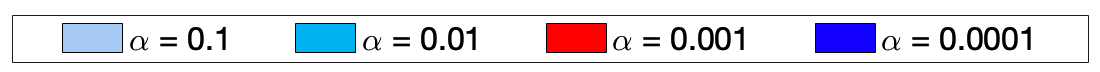}
\vspace{-2mm}
\caption{Maximal cost values for budgets $B=20, 50, 100, 150$ (from left to right) using Chebyshev's inequality (top) and Chernoff bound (bottom) for $\alpha$ = $0.1$, $0.01$, $0.001$, $0.0001$ with uniform expected weights set to $1$.}
\label{fig:plot_costs}
\end{figure*}


\begin{figure*}[t]
\centering
\includegraphics[width=0.22\textwidth]{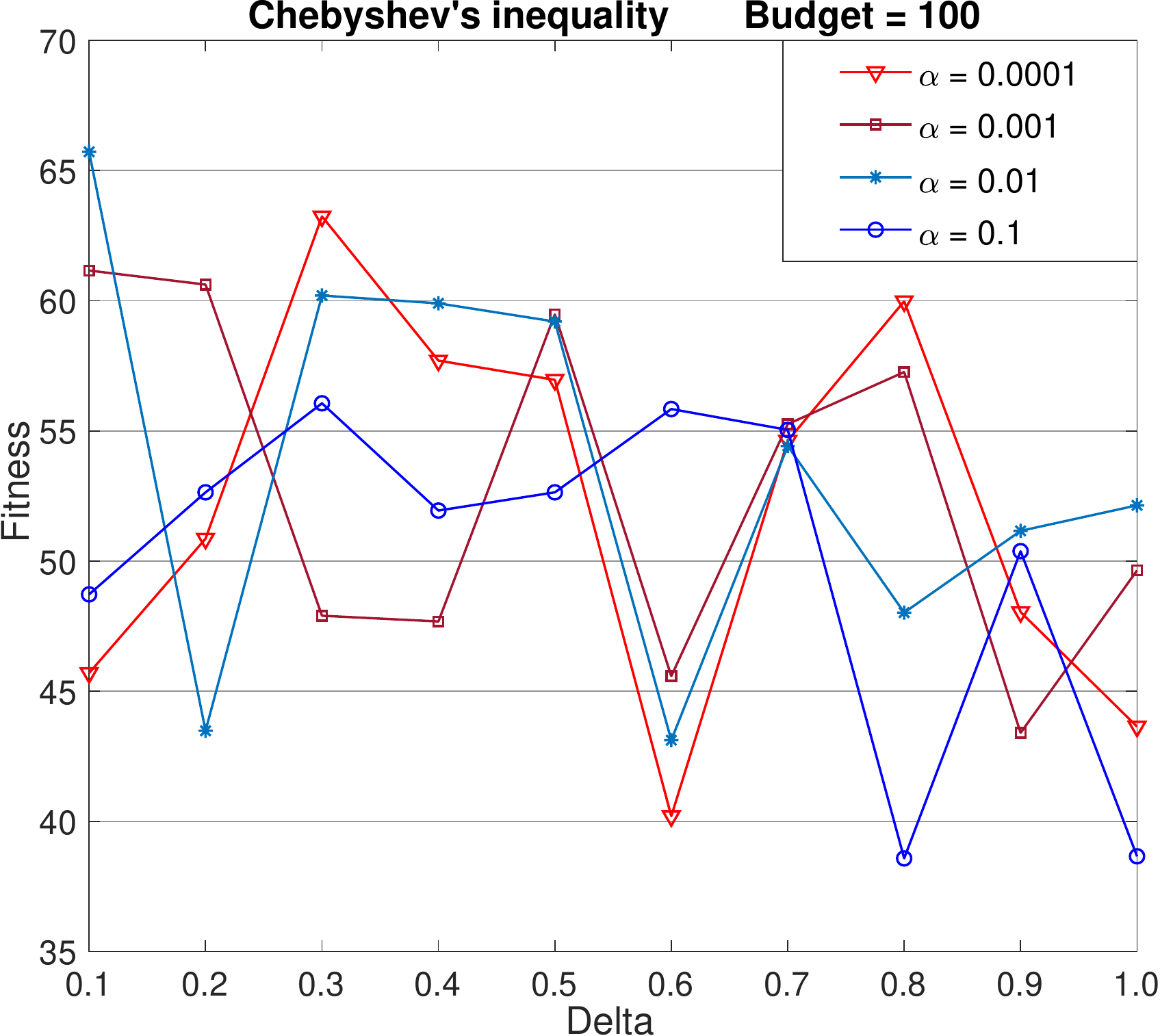}
\includegraphics[width=0.215\textwidth]{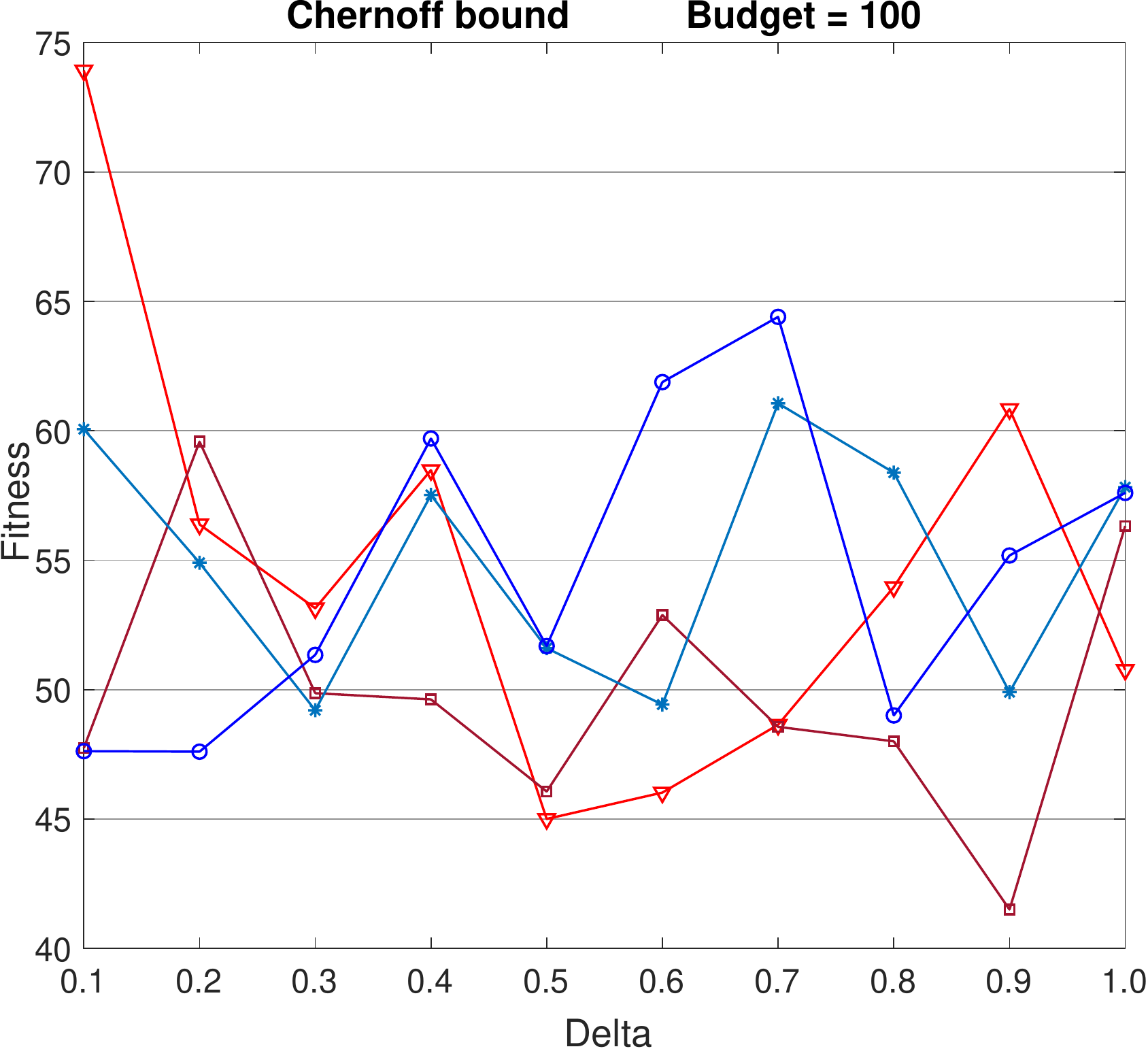}
\includegraphics[width=0.22\textwidth]{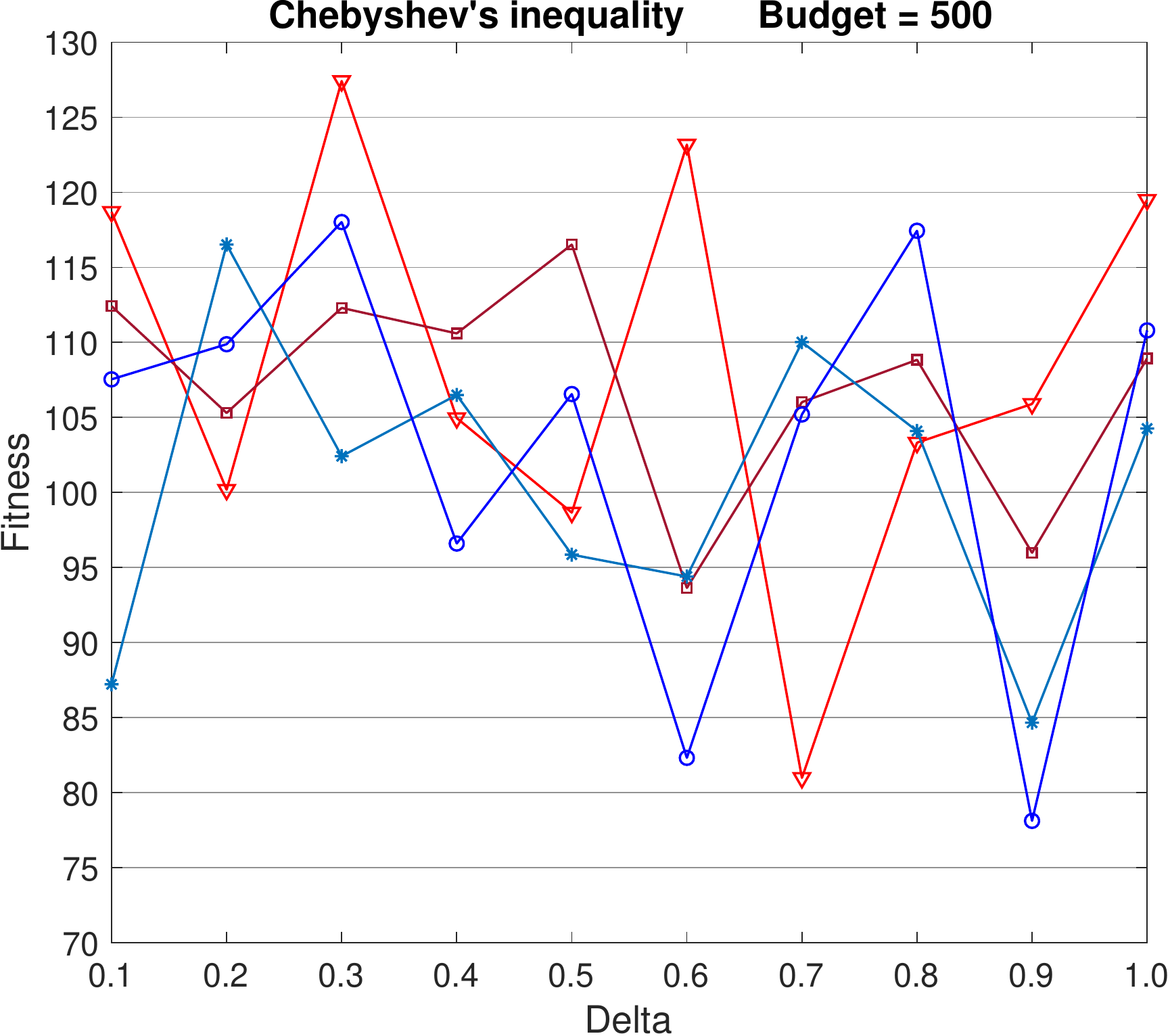}
\includegraphics[width=0.22\textwidth]{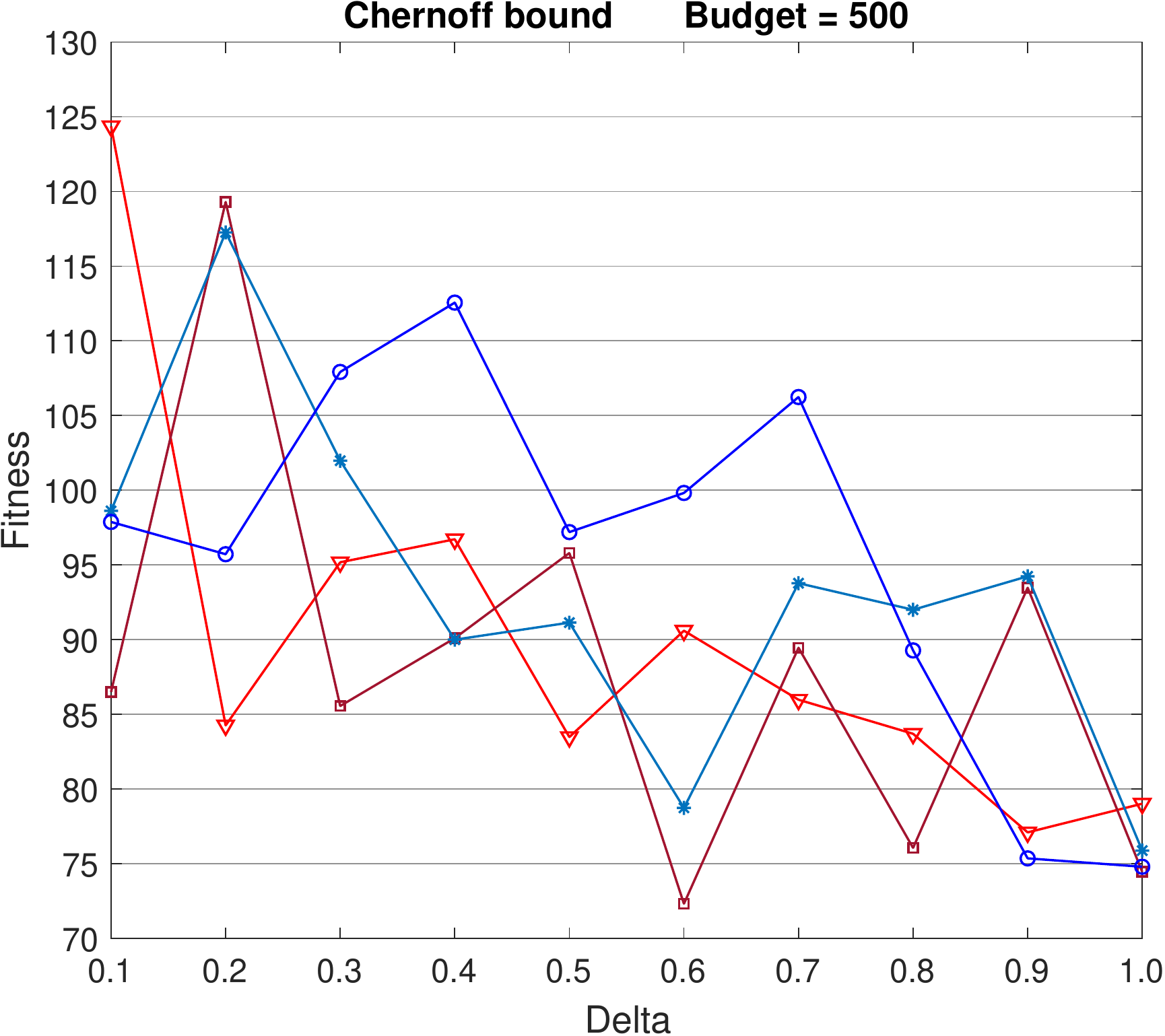}
\vspace{-2mm}
\caption{Function values for budgets $B = 100$ (left) and $B=500$ (right) using Chebyshev's inequality and Chernoff bound for $\alpha$ = $0.1$, $0.01$, $0.001$, $0.0001$ with degree dependent random weights.}
\label{fig:plot2}
\end{figure*}

%

\begin{figure}[t]
\centering
\includegraphics[width=0.22\textwidth]{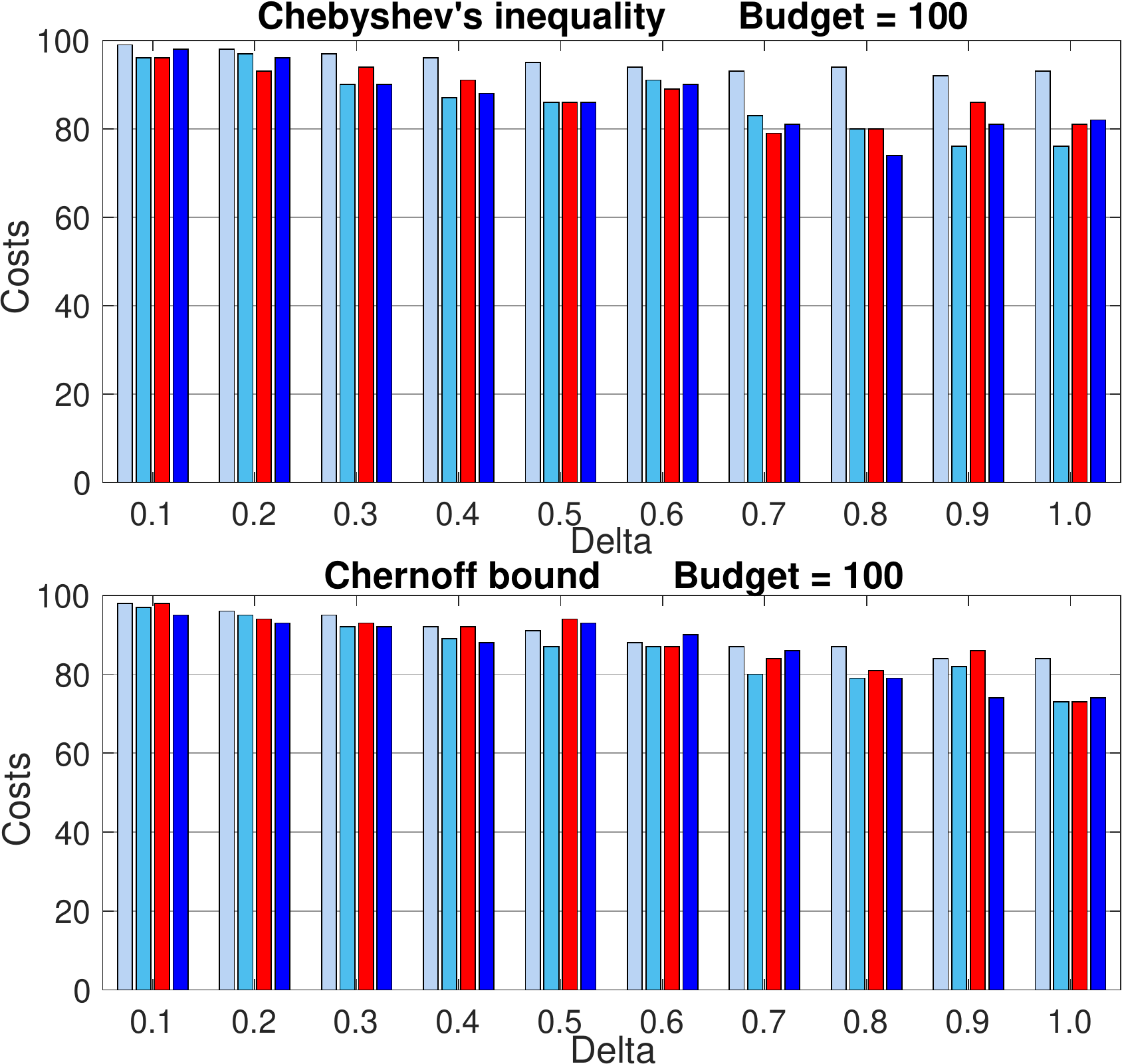}
\includegraphics[width=0.22\textwidth]{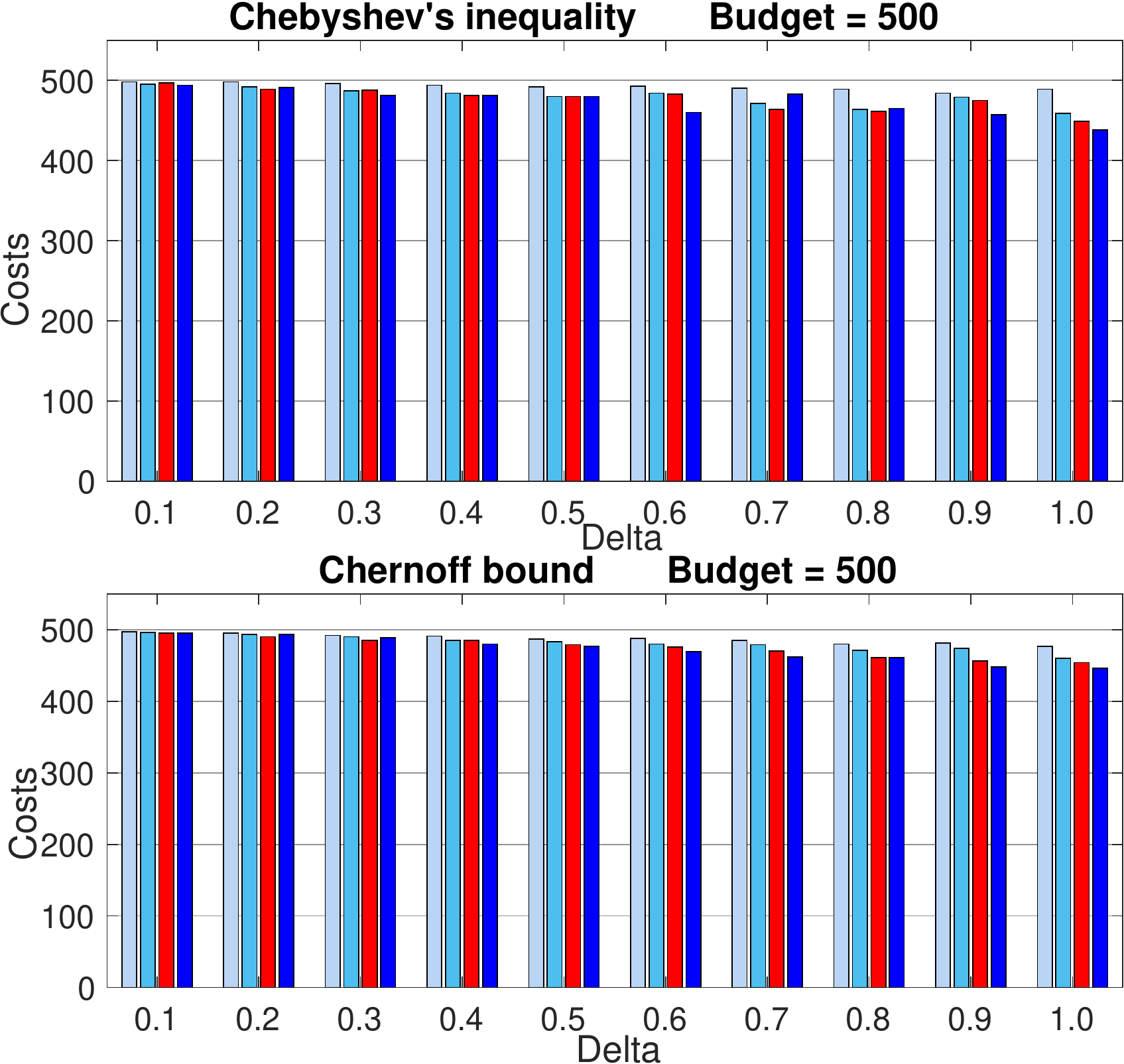}
\includegraphics[width=0.4\textwidth]{alpha.png}
\vspace{-2mm}
\caption{Maximal cost values for budget $B=100$ (left) and $B=500$ using Chebyshev's inequality (top) and Chernoff bound (bottom) for $\alpha$ = $0.1$, $0.01$, $0.001$, $0.0001$ with degree dependent random weights.}
\label{fig:plot3}
\end{figure}

\section{Experimental Investigations}
\label{sec:experiments}

We examine our greedy algorithms on the submodular influence maximization problem~\cite{DBLP:conf/aaai/ZhangV16,DBLP:conf/ijcai/QianSYT17,DBLP:conf/kdd/LeskovecKGFVG07}.
We study the impact of Chebyshev's inequality and Chernoff bounds for the chance-constrained optimization problem in the context of various stochastic parameter settings.

\subsection{The Influence Maximization Problem}

The influence maximization problem (IM) involves finding a set of most influential users in a social network. IM aims to maximize the spread of influence through a social network, which is the graph of interactions within a group of users~\cite{DBLP:conf/kdd/KempeKT03}. A selection of users encounters a cost and the problem of influence maximization has been studied subject to a deterministic constraint that limits the cost of the user selection.

Formally, the problem investigated in our experiments is defined as follows.
Given a directed graph $G=(V,E)$ where each node represents a user, and each edge $(u,v) \in E$ has assigned an edge probability $p_{u,v}$ that user $u$ influences user $v$, the aim of the IM problem is to find a subset $X \subseteq V$ such that the expected number of activated nodes $E[I(X)]$ of $X$ is maximized. 
Given a cost function $c:V\rightarrow \R^+$ and a budget $B\ge 0$, the corresponding submodular optimization problem under chance constraints is given as
$$\argmax_{X\subseteq V} E[I(X)] \text{ s.t. } \Pr[c(X)> B]\leq \alpha.$$ 
A more detailed description of the influence maximization problem on social networks is available at~\cite{DBLP:conf/kdd/LeskovecKGFVG07,DBLP:journals/toc/KempeKT15,DBLP:conf/aaai/ZhangV16,DBLP:conf/ijcai/QianSYT17}. Note though, that these works all study the case in which the cost of adding a user is deterministic.

We consider two types of cost constraints matching the settings investigated in the theoretical analysis.
In the first type, the expected weight of all nodes is $1$, i.e. $a(v)=1$, for all $v \in V$ whereas in the second setting the expected cost of a node $v$ is given by $a(v) = deg(v)+1$. 
Here $deg(v)$ denotes the degree of $v$ in the given graph $G$.
We examine GA for the first type and GGA for the second type of instances.
The chance constraint for both settings requires that the probability of exceeding the given bound $B$ is at most $\alpha$. We investigate different reliability thresholds given by $\alpha$ together with a range of $\delta$ values which determine the amount of uncertainty.
\ignore{
that compute the sum of the weights of all chosen nodes. One constraint takes into account different random weights of chosen nodes of the graph whereas the second constraint sums weights that have all the expected weights $1$. We define for both cost functions, the constraint is met if the cost is at most $B$. 
}
To evaluate the influence maximization problem in the context of surrogate chance constraints, we employ a graph obtained from social news data, with simple settings collected from
a real-world data set obtained from the social news aggregator Digg. The Digg dataset~\cite{DBLP:journals/epjds/HoggL12} contains stories submitted to the platform over a period of a month, and identification (IDs) of users who voted on the popular stories.
The data consist of two tables that describe friendship links among users and the anonymized user votes on news stories~\cite{DBLP:journals/epjds/HoggL12,DBLP:conf/aaai/RossiA15}.
We use the preprocessed data with $3523$ nodes and $90244$ edges, and estimated edge probabilities from the user votes based on the method in~\cite{barbieri2012topic}. 


\subsection{Uniform Chance Constraints}

We consider the results for the greedy algorithm based on Chebyshev's inequality and the greedy algorithms based on Chernoff bounds subject to the uniform distribution with all the expected weights $1$. Figure~\ref{fig:plot} shows the results of influence spread maximization for the GA based on Chebyshev's inequality (first row) and the GA based on Chernoff bounds (second row) for budgets $B$ = $20$, $50$, $100$, $150$. For the experimental investigations of our GA, we consider all combinations of $\alpha = 0.1, 0.01, 0.001, 0.0001$, and $\delta = 0.1, 0.2, 0.3, 0.4, 0.5, 0.6, 0.7, 0.8, 0.9, 1.0$. 


We compare the results in terms of the fitness achieved at each $\alpha$ and $\delta$ level for budgets $B =20, 50, 100, 150$. For the deterministic setting, where $B$ elements can be used, the obtained function values for $B = 20, 50, 100, 150$ are $161.42, 214.78, 287.56$, and $345.84$, respectively. For the chance-constrained problem, Figure~\ref{fig:plot} shows that the GA using Chernoff chance estimates has a better performance/higher fitness values than the GA using Chebyshev's inequality among the considered $\delta$ values in most of the cases. It should be noted that higher budget values have a larger influence on the separation between the two surrogate constraints. In these cases, the Chernoff-based GA performs consistently better, and this is the result of the good asymptotic behavior of the Chernoff bound~\cite{chernoff1952,Mitzenmacher2005}. 

We also compare the results in terms of the expected cost of the solution obtained by the GA based on Chebyshev's inequality and Chernoff bound for budgets $B$ = $20, 50, 100, 150$ as the tail inequalities limit the maximum cost. Note that cost in this case is the same as the number of items picked by GA. The results are shown in Figure~\ref{fig:plot_costs}. The GA using the Chernoff bound is able to include more elements.
The results show that the GA using the Chernoff bound allows for solutions with a larger number of elements if the budget $B$ is high and $\alpha$ is small, for example $B$ = $150$, and for $\alpha$ = $0.001$, $0.0001$. The GA using Chebyshev's inequality has a better performance in the case of high $\alpha$ values, $\alpha$ = $0.1, 0.01$ on the examined problem. 

\subsection{Non-Uniform Chance Constraints}
We now consider the GGA for the setting where the expected weights are not uniform but depend on the degree of the node of the graph. We consider budgets $B=100, 500$ keeping the other parameters the same as in the uniform case.
Figure~\ref{fig:plot2} and \ref{fig:plot3} show the function and cost values obtained
by the GGA based on Chernoff bounds for the combinations of $\alpha$ and $\delta$. 
We do not observe a clear difference between using Chebyshev's inequality or Chernoff bound for the different combinations of $\delta$ and $\alpha$ which is due to a relatively small number of elements present in the solutions.
The function values of the approaches are overall decreasing for each $\alpha$ with increasing value of $\delta$. 
Due to the stochastic behavior of the evaluation, the obtained fitness values show a jagged, irregular course with increasing value of $\delta$.


\section{Conclusion}
Chance constraints play a crucial role when dealing with stochastic components in constrained optimization. We presented a first study on the optimization of submodular problems subject to a given chance constraint.
We have shown that popular greedy algorithms making use of Chernoff bounds and Chebyshev's inequality provide provably good solutions for monotone submodular optimization problems with chance constraints. Our analysis reveals the impact of various stochastic settings on the quality of the solution obtained. Furthermore, we have provided experimental investigations that show the effectiveness of the considered greedy approaches for the submodular influence maximization problem under chance constraints.

\ignore{
\carola{Going forward, we plan to extend our analysis to the case in which the weights do not necessarily show the same dispersion $\delta$, i.e., where $w(s)$ are chosen uniformly in some interval $[a(s)-\delta(s),a(s)+\delta_2(s)]$. It seems likely that for further generalizations to cases with not necessarily uniformly distributed weights, different surrogates are}
}

\section{Acknowledgements}
This work has been supported by the Australian Research Council through grant DP160102401, by the South Australian Government through the Research Consortium "Unlocking Complex Resources through Lean Processing", by the Paris Ile-de-France region, and by a public grant as part of the
Investissement d'avenir project, reference ANR-11-LABX-0056-LMH,
LabEx LMH.
}

\newcommand{\etalchar}[1]{$^{#1}$}
\providecommand{\bysame}{\leavevmode\hbox to3em{\hrulefill}\thinspace}
\providecommand{\MR}{\relax\ifhmode\unskip\space\fi MR }
\providecommand{\MRhref}[2]{%
  \href{http://www.ams.org/mathscinet-getitem?mr=#1}{#2}
}
\providecommand{\href}[2]{#2}

\end{document}